
\documentclass{article}
\pdfoutput=1 

\usepackage{pifont}
\newcommand{\cmark}{\ding{51}}%
\newcommand{\xmark}{\ding{55}}%
\usepackage{microtype}
\usepackage{booktabs} 
 




\usepackage[accepted]{icml2024}

\usepackage{amsmath}
\usepackage{amssymb}
\usepackage{mathtools}
\usepackage{amsthm}


\usepackage[english]{babel}
\usepackage[utf8]{inputenc}
\usepackage{amsmath,amssymb,amsthm}
\usepackage{bm}
\usepackage{parskip}
\usepackage[pdftex]{graphicx}
\usepackage{mathtools}
\usepackage[makeroom]{cancel}
\usepackage{mathtools}
\usepackage{tabularx}
\usepackage{svg}
\usepackage{adjustbox}
\usepackage{makecell}
\usepackage{tabularray}
\usepackage{color, colortbl}
\usepackage{algorithm2e}
\usepackage{multirow}
\usepackage[font={small}]{caption} 
\usepackage{subcaption}
\usepackage{hyperref}
\usepackage[capitalize,noabbrev]{cleveref}
\usepackage{xcolor}
\usepackage[T1]{fontenc}
\usepackage[disable]{todonotes}
\usepackage[rightcaption]{sidecap}
\setlength {\marginparwidth }{2cm} 
\usepackage{float}
\usepackage{tabstackengine}
\usepackage{hhline}
\usepackage{cellspace}
\definecolor{lightkhaki}{rgb}{0.6, 0.81, 0.93}

\definecolor{LightCyan}{rgb}{0.94, 0.9, 0.55}

\definecolor{amaranth}{rgb}{0.9, 0.4, 0.31}

\usepackage{tikz}
\newcommand*\circled[1]{\tikz[baseline=(char.base)]{
            \node[shape=circle,draw,inner sep=.7pt] (char) {#1};}}

\sidecaptionvpos{figure}{t}

\setlength\cellspacetoplimit{0pt}
\setlength\cellspacebottomlimit{0pt}



\captionsetup[table]{skip=10pt}
\makeatletter
\let\original@footnote\footnote
\newcommand{\align@footnote}[1]{%
  \ifmeasuring@
    \chardef\@tempfn=\value{footnote}%
    \footnotemark
    \setcounter{footnote}{\@tempfn}%
  \else
    \iffirstchoice@
      \original@footnote{#1}%
    \fi
  \fi}
\pretocmd{\start@align}{\let\footnote\align@footnote}{}{}
\makeatother

\newcommand{\ben}[1]{\textcolor{blue}{#1}}
\newcommand{\benswitch}[1]{%
\ifthenelse{\equal{#1}{0}}{\renewcommand{\ben}[1]{}}{}}
\benswitch{0} 

\newcommand{\rebuttal}[1]{#1}
\newcommand{\rebuttalswitch}[1]{%
\ifthenelse{\equal{#1}{0}}{\renewcommand{\ben}[1]{}}{}}
\benswitch{1} 

\newtheorem{proposition}{Proposition}
\theoremstyle{definition}
\newtheorem{definition}{Definition}[section]



\sidecaptionvpos{figure}{t} 


\usepackage{amsmath,amsfonts,bm}




\def\Figref#1{Figure~\ref{#1}}





\def\eqref#1{equation~\ref{#1}}
\def\Eqref#1{Equation~\ref{#1}}








\def\1{\bm{1}}
\newcommand{\train}{\mathcal{D}}










\DeclareMathAlphabet{\mathsfit}{\encodingdefault}{\sfdefault}{m}{sl}
\SetMathAlphabet{\mathsfit}{bold}{\encodingdefault}{\sfdefault}{bx}{n}











\newcommand{\KL}{D_{\mathrm{KL}}}



\DeclareMathOperator*{\argmax}{arg\,max}
\DeclareMathOperator*{\argmin}{arg\,min}

\usepackage{url}




\icmltitlerunning{A Rate-Distortion View of Uncertainty Quantification}
\newcommand{\quotes}[1]{``#1''}

\begin{document}

\twocolumn[
\icmltitle{A Rate-Distortion View of Uncertainty Quantification}



\icmlsetsymbol{equal}{*}

\begin{icmlauthorlist}
\icmlauthor{Ifigeneia Apostolopoulou}{yyy}
\icmlauthor{Benjamin Eysenbach}{comp}
\icmlauthor{Frank Nielsen}{sch}
\icmlauthor{Artur Dubrawski}{yyy}
\end{icmlauthorlist}

\icmlaffiliation{yyy}{Machine Learning Department, AutonLab, Carnegie Mellon University}
\icmlaffiliation{comp}{Computer Science Department, Princeton University}
\icmlaffiliation{sch}{Sony Computer Science Laboratories Inc., Tokyo, Japan}

\icmlcorrespondingauthor{Ifigeneia Apostolopoulou}{iapostol@andrew.cmu.edu, ifiaposto@gmail.com}

\icmlkeywords{Machine Learning, ICML}

\vskip 0.3in
]

\let\orghat\hat
\newcommand{\that}[1]{\orghat{\kern-1.0pt #1}}

\renewcommand*{\Eqref}[1]{Eq.~\ref{#1}}
\renewcommand*{\Figref}[1]{Fig.~\ref{#1}}
\def\train{{\mathrm{train}}}

\def\KL{{\mathrm{KL}}}

\newcommand*\rot{\rotatebox{90}}



\printAffiliationsAndNotice{}  

\begin{abstract}
In supervised learning, understanding an input’s proximity to the training data can help a model decide whether it has sufficient evidence for reaching a reliable prediction. While powerful probabilistic models such as Gaussian Processes naturally have this property, deep neural networks often lack it. In this paper, we introduce Distance Aware Bottleneck (DAB), i.e., a new method for enriching deep neural networks with this property. Building on prior information bottleneck approaches, our method learns a codebook that stores a compressed representation of all inputs seen during training. The distance of a new example from this codebook can serve as an uncertainty estimate for the example. The resulting model is simple to train and provides deterministic uncertainty estimates by a single forward pass. Finally, our method achieves better out-of-distribution (OOD) detection and misclassification prediction than prior methods, including expensive ensemble methods, deep kernel Gaussian Processes, and approaches based on the standard information bottleneck.

\end{abstract}
\section{Introduction}
\begin{figure}[!ht]
\centering
\includegraphics[width=0.92\columnwidth, angle=0]
{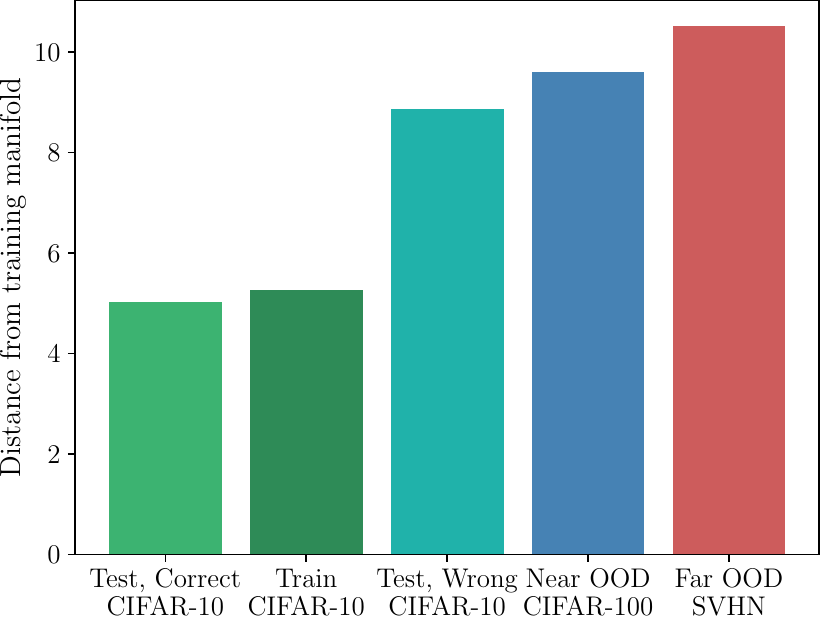}
\caption{\textbf{Distance awareness for principled uncertainty quantification.} A distance-aware model can measure the distance between input examples and the training examples. Our method learns distances where misclassified datapoints, semantic (near OOD), and domain (far OOD) deviations can be identified by larger distances. Our method learns and uses a codebook for representing the training dataset.
Here, we report distances from a codebook trained on CIFAR-10.}
\label{fig:dab_distance_awareness}
\end{figure}
 Deep learning models that \quotes{know what they know} are becoming increasingly useful since they can better understand when to make confident predictions and when to ask for human help~\citep{50669}. Early approaches to uncertainty estimation built from probabilistic approaches tailored to deep neural networks (DNNs)~\citep{blundell2015weight, osawa2019practical, gal2016dropout} or deep ensembles~\citep{lakshminarayanan2017simple, wilson2020bayesian}. A shared characteristic of these methods is that they require multiple model samples to produce a reliable uncertainty estimate. Despite growing research interest in uncertainty quantification, we still lack reliable and efficient methods for real-world ML deployment.
 
 A recently emerged class of scalable uncertainty estimation methods, the \textit{Deterministic Uncertainty Methods} (DUMs)~\citep{pmlr-v162-postels22a,charpentier2023training}, affords uncertainty estimates with a single forward-pass. These methods are \textit{distance-aware}~\citep{liu2020simple} since they can quantify a distance score or measure of a new test example from previously trained-upon datapoints. Distance awareness renders DUMs a principled and theoretically motivated~\citep{liu2020simple,JMLR:v24:22-0479} solution to uncertainty quantification. \rebuttal{In particular, the distance score can indicate Out-Of-Distribution (OOD) examples of varying dissimilarity from the training datapoints or in-distribution areas where the model fails to generalize (\Figref{fig:dab_distance_awareness}). 
 
 Existing DUMs are usually tied to specific regularization techniques~\citep{miyato2018spectral, gulrajani2017improved} to mitigate feature collapse. Although such additional weight constraints help these methods reach state-of-the-art OOD detection results, they may undermine their calibration, i.e., how well a DNN can predict its incorrectness~\citep{pmlr-v162-postels22a}. Moreover, in the absence of similar constraints in large, pre-trained models, integration of current DUMs into industrial applications becomes difficult.}

In this work, we seek to improve the quality of uncertainty estimates using a single-model, deterministic characterization. The key contributions of this paper are as follows:
\begin{itemize}
    \item We formulate uncertainty quantification as the computation of a rate-distortion function to obtain a compressed representation of the training dataset. This representation is a set of prototypes defined as centroids of the training datapoints with respect to a distance measure. The expected distance of a datapoint from the centroids provides model's uncertainty for the datapoint (\Figref{fig:dab_distance_awareness}). 
    
    \item  We take a ``meta-probabilistic'' perspective to the rate-distortion problem. In particular, the distortion function operates on distributions of embeddings and corresponds to a statistical distance (\Figref{fig:overview}). To do so, we use the Information Bottleneck (IB) framework. The proposed formulation, the Distance Aware Bottleneck (DAB), jointly regularizes DNN’s representations and renders it distance-aware. 
    
    \item We design and qualitatively verify a practical deep learning algorithm that is based on successive estimates of the rate-distortion function to identify the centroids of the training data (Algorithm~\ref{alg:training_ig_vib}).  
    
    \item We show experimentally that our method can detect both OOD samples and misclassified samples. In particular, DAB  outperforms baselines when used for OOD tasks and closes the gap between single forward pass methods and expensive ensembles in terms of calibration (Tables~\ref{tab:ood_baselines},~\ref{table:calibration_score_}). 
\item  Finally, we show that DAB can be trained and applied post-hoc to a large, pre-trained feature extractor offering similar advantages for challenging and large-scale datasets (Table~\ref{table:imagenet}). 
\end{itemize}
\section{Related Work}\label{sec:related_work}

In this section, we provide an overview of existing DUMs and relate them to the proposed model. Most competitive DUMs can be taxonomized as Gaussian Process models or cluster-based approaches. 

Gaussian Processes (GPs) are intrinsically distance-aware models since they are defined by a kernel function that quantifies similarity to the training datapoints. SNGP~\citep{liu2020simple,JMLR:v24:22-0479} relies
on a Laplace approximation of the GP based on Random
Fourier Features (RFF)~\citep{rahimi2007random}. DUE~\citep{van2021feature} uses the inducing point GP approximation~\citep{titsias2009variational}. In Table~\ref{tab:gp_vs_daib}, we provide some analogies between Gaussian Processes and the model proposed in this work. 
\begin{table}[!htbp]
    \caption{\textbf{\rebuttal{Analogies between GP and DAB.}}}
    \label{tab:gp_vs_daib}
\adjustbox{width=0.99\columnwidth,center}{
\centering
{\Huge
    \begin{tabular}{ c c  c }
        \toprule
\textbf{}     &  \textbf{ Gaussian Process}   
& \textbf{ Distance Aware Bottleneck} \\\midrule
\thead{ \Huge Compression of the \\  \Huge training dataset $\mathcal{D}_\train$} &  Inducing Points    
&  Codebook \\\hline
 Feature space &  \Huge $\mathbb{R}^d$        
&  \thead{ \Huge parameter space $\mathbf{\Theta}$ of a family of \\  \Huge distributions $\mathcal{P}=\{p(\boldsymbol{z};\boldsymbol{\theta})\mid \boldsymbol{\theta} \in \boldsymbol{\Theta} \}$} \\\hline
 Distance measure &  Euclidean norm        
& Statistical distance  \\\hline
\bottomrule
\vspace{-1in}
\end{tabular}}}
\end{table}
Both SNGP and DUE enforce bi-Lipschitz constraints on the network by spectral normalization~\citep{miyato2018spectral} to encourage sensitivity and smoothness of the extracted features. 

In contrast, our work builds on IB methods ~\citep{alemi2018uncertainty} to avoid feature collapse. IB methods regularize the network by encouraging it to learn informative representations. Therefore, they are simple to implement and train. However, prior IB methods~\citep{alemi2018uncertainty} cannot sufficiently represent large and complex datasets. In this paper, we revise and augment prior IBs with a codebook capable of coding high-dimensional and multi-modal training distributions. The training is facilitated by a learning algorithm (Section~\ref{sec:learning_algorithm}) which, along with the gradient updates, matches the training examples with the entries of the codebook.

More closely related to our work is DUQ~\citep{van2020uncertainty}. Similar to our work, DUQ quantifies uncertainty as the distance from centroids responsible for representing the training data. The distance is computed in terms of a Radial Basis Function (RBF) kernel.  In contrast to our work, DUQ is trained to minimize a binary cross entropy loss function. This function assigns datapoints to clusters in a supervised manner. Therefore, the number of centroids is hardwired to the number of classes. This restriction renders the deployment of the model to regression tasks or classification tasks with a large number of classes difficult. On the other hand, our model provides a unified notion of uncertainty for both regression and classification tasks. Estimating regression uncertainty is important in many machine learning subfields. For example, in deep reinforcement learning uncertainty over the Q-values can be leveraged for efficient exploration or risk estimation~\citep{osband2016deep,lee2021sunrise, fujimoto2019off,wu2021uncertainty}. Effective DUMs, such as our model, could mend the current lack of both efficient and reliable uncertainty methods in unsupervised learning settings.

\rebuttal{Under a broad definition, data augmentation methods ~\citep{hendrycks2020augmix, pinto2022using} can also be considered DUMs. They improve network's learned representations by encouraging the model to be sensitive to or invariant against image perturbations. Design of such perturbations, however, requires domain expertise and/or prior knowledge. This requirement makes it difficult to extend existing data augmentation methods to other tasks (such as regression) or modalities (such as text). Here, we focus on principled, distance-aware DUMs, and borrowing terminology of~\citet{pmlr-v162-postels22a,Mukhoti_2023_CVPR}, unless otherwise noted, we use DUMs to refer to distance-aware DUMs.} Finally, we note that deep ensembles are included as a benchmark in our experiments, as they represent the current state-of-the-art for uncertainty quantification. However, while simple in concept and implementation, their computational and memory cost are prohibitive.
\begin{figure*}[!ht]
\begin{subfigure}[b]{0.245\textwidth}
\centering
\includegraphics[width=0.99\textwidth]{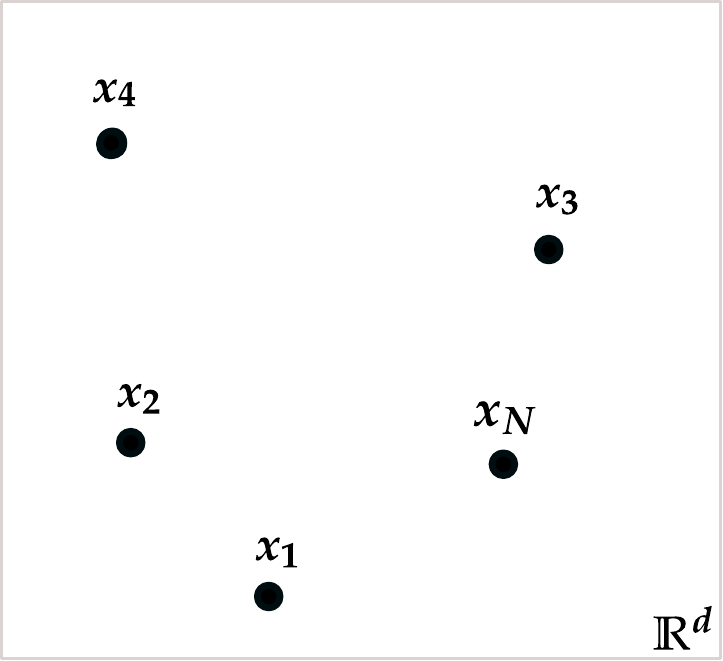}
\caption{$\mathcal{D}_\train$ as a set of points in the Euclidean space $\mathbb{R}^d$.}
\label{d_train}
\end{subfigure}
  \begin{subfigure}[b]{0.245\textwidth}
\centering\includegraphics[width=0.99\textwidth]{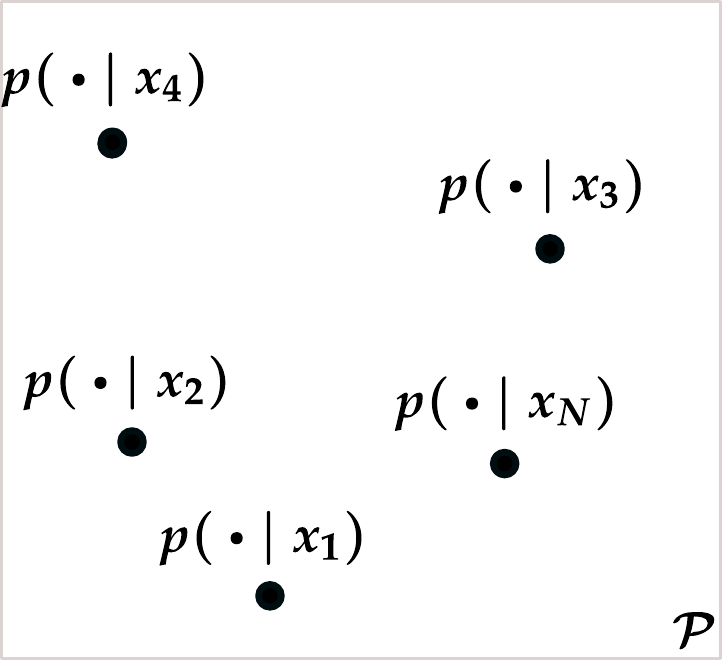}
\caption{$\mathcal{D}_\train$ as a set of points in distribution space $\mathcal{P}$.}
 \label{d_train_enc}
  \end{subfigure}
\begin{subfigure}[b]{0.245\textwidth}
\centering\includegraphics[width=0.99\textwidth]{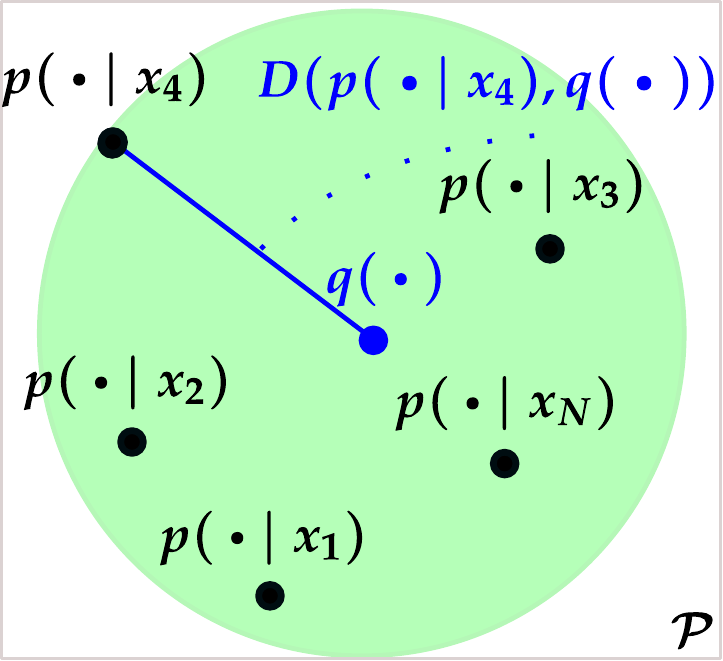}
\caption{Support of $\mathcal{D}_\train$ as a \\ statistical ball ($k=1$).}
  \label{d_train_enc_center}
  \end{subfigure}
\begin{subfigure}[b]{0.245\textwidth}
\centering\includegraphics[width=0.99\textwidth]{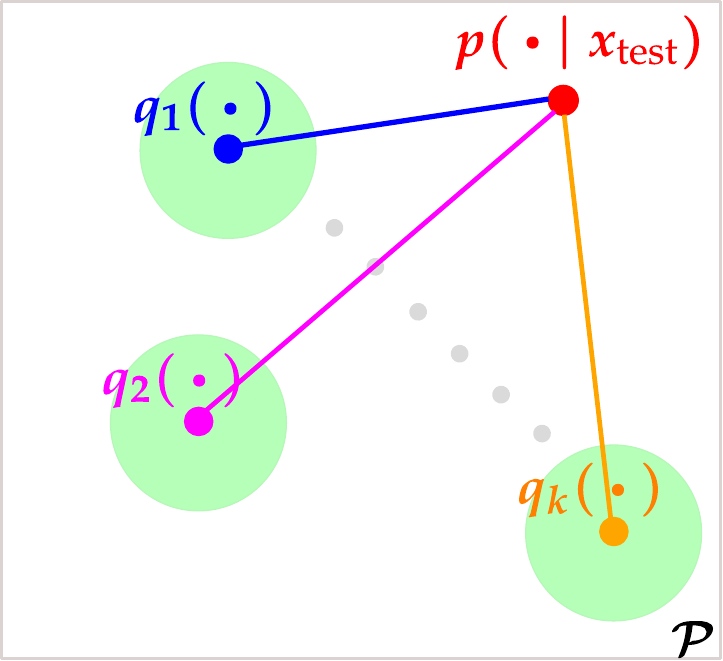}
\caption{Distance from the codebook of encoders ($k>1$).}
  \label{d_train_enc_center_codebook}
  \end{subfigure}
\caption{\textbf{Overview of DAB.} Uncertainty quantification in DAB is based on compressing the training dataset $\mathcal{D}_\train$ by learning a codebook and computing distances from the codebook. The datapoints in $\mathcal{D}_\train$, originally lying in $\mathbb{R}^d$ (\ref{d_train}), are embedded into distribution space $\mathcal{P}$ of a parametric family of distributions through their encoders (\ref{d_train_enc}). Compression of $\mathcal{D}_\train$ amounts to finding the {\color{blue}centroids} of the encoders in terms of a statistical distance $D$ (\ref{d_train_enc_center}). For complex datasets, usually multiple centroids are needed (\ref{d_train_enc_center_codebook}). The uncertainty for a previously unseen {\color{red} test datapoint} is quantified by its expected distance from the codebook: $\mathrm{uncertainty}(\color{red}{x_{\text{test}}}\color{black})=\mathbb{E}[D(\color{red} p(\boldsymbol{z} \mid \boldsymbol{x}_{\text{test}};\boldsymbol{\theta})\color{black},\color{blue} q_\kappa (\boldsymbol{z}; \boldsymbol{\phi}) \color{black})]$.}
\label{fig:overview}
\end{figure*}
\section{Preliminaries}
\subsection{Information Bottleneck}
The Information Bottleneck (IB)~\citep{tishby2000information} provides an information-theoretic view for balancing the complexity of a stochastic encoder $Z$ for input $X$ \footnote{We denote random variables as $X, Y, Z$ and their instances as $\boldsymbol{x}, \boldsymbol{y}, \boldsymbol{z}$.} and its predictive capacity for the desired output $Y$. The IB objective is:
\begin{align}
\min_{\boldsymbol{\theta}} \quad -I_{}(Z,Y; \boldsymbol{\theta}) +\beta\,I_{}(Z,X;\boldsymbol{\theta}),
\label{eq:ib_ii}
\end{align}
where $\beta \ge 0$ is the trade-off factor between the accuracy term $I_{}(Z,Y;\boldsymbol{\theta})$ and the complexity term $I_{}(Z,X;\boldsymbol{\theta})$. $\boldsymbol{\theta}$ denotes the parameters of the distributional family of encoder $p_{}(\boldsymbol{z}\mid \boldsymbol{x}; \boldsymbol{\theta})$\footnote{$\boldsymbol{\theta}$ will represent a function implemented by a neural network. For input $\boldsymbol{x}$ it computes the parameters of the conditional distribution $p(\cdot \mid \boldsymbol{x};{\boldsymbol{\theta}})$ in its output. For example, for a Gaussian with diagonal covariance: $\boldsymbol{\theta}({\boldsymbol{x}})=\{\mu(\boldsymbol{x}), \sigma(\boldsymbol{x})\}, \mu(\boldsymbol{x})\in \mathbb{R}^d, \sigma(\boldsymbol{x}) \in \mathbb{R}^d_{\ge0}$. Optimization with respect to $\boldsymbol{\theta}$ will refer to optimization with respect to the weights of network $\boldsymbol{\theta}$.}.
In words, training by~\Eqref{eq:ib_ii} encourages the model to find a representation $Z$ that is maximally expressive about output $Y$ while being maximally compressive about input $X$. 
Typically, the mutual information terms in~\Eqref{eq:ib_ii} cannot be computed in closed-form since 
they involve intractable marginal distributions (\Eqref{eq:exact_decoder},~\ref{eq:exact_marginal}). The Variational Information Bottleneck (VIB)~\citep{alemi2016deep} considers parametric approximations $m(\boldsymbol{y}\mid\boldsymbol{z};\boldsymbol{\theta})$, $q(\boldsymbol{z}; \boldsymbol{\phi})$ to these marginals belonging to a distributional family parametrized by $\boldsymbol{\theta}$ \footnote{In the rest of the paper, we use $\boldsymbol{\theta}$ to denote the joint set of parameters of encoder and variational decoder.} and $\boldsymbol{\phi}$ respectively. The VIB objective (\Eqref{eq:vib_loss}) maximizes a lower bound of $I(Z,Y; \boldsymbol{\theta})$ and minimizes an upper bound of $I(Z,X;\boldsymbol{\theta})$ . In this work, we reconsider the complexity term. The upper bound of this term is an expected Kullback-Leibler divergence:
\begin{align}
I(Z,X; \boldsymbol{\theta}) &= \mathbb{E}_{X}\big[{D}_\KL(p( \boldsymbol{z}\mid \boldsymbol{x};\boldsymbol{\theta}), p(\boldsymbol{z}))] \nonumber  \\ &\le \mathbb{E}_{X}\big[{D}_\KL(p(\boldsymbol{z}\mid \boldsymbol{x};\boldsymbol{\theta}), q(\boldsymbol{z};\boldsymbol{\phi}))\big]\label{eq:tau_z_centroid}.
\end{align}
The expectation in~\Eqref{eq:tau_z_centroid} is taken, in practice, with respect to the empirical distribution of the training dataset $\mathcal{D}_{\train}=\{(\mathbf{x}_i,\mathbf{y}_i)\}_{i=1}^{N}$:
\begin{align}
I(Z,X ;\boldsymbol{\theta})  \lessapprox \frac{1}{N} \sum_{i=1}^{N}{D}_\KL(p(\boldsymbol{z}\mid\boldsymbol{x}_i;\boldsymbol{\theta}), q(\boldsymbol{z};\boldsymbol{\phi})).
\label{eq:vib_i_x_z_e}
\end{align}
\subsection{Rate Distortion Theory}
The rate-distortion theory~\citep{berger1971, berger1998lossy,cover1999elements} quantifies the fundamental limit of data
compression, i.e., at least how many bits are needed to quantize data coming from a stochastic source given a desired fidelity. Formally, consider random variable $X \sim p(\boldsymbol{x})$ with support set\footnote{Support set of $X \sim p(\boldsymbol{x})$ is the set $\mathcal{X}=\{\boldsymbol{x}: p(\boldsymbol{x}) >0 \}$.} $\mathcal{X}$. Data coming from source $X$ will be compressed by mapping them to a random variable $\hat{X}$ with support set $\mathcal{\hat{X}}$. It is common to refer to $\hat{X}$ as the source code or quantization of $X$. In this work, we consider a discrete source over $\mathcal{D}_\train$ following the empirical distribution. The formal description is deferred to Section~\ref{sec:ua_ib}.

The quality of the reconstructed data is assessed using a distortion function $D:\mathcal{X} \times \mathcal{\hat{X}} \rightarrow \mathbb{R}^{+}$. 
The rate-distortion function is the minimum achievable rate (number of bits) of the quantization scheme for a prescribed level of expected distortion. In Lagrange formulation, it is the problem: 
\begin{align}
& R \triangleq \min_{p(\hat{\boldsymbol{x}}\mid \boldsymbol{x})} \quad  I (X;\hat{X}) + \alpha \mathbb{E}_{X,\hat{X}}[D(\boldsymbol{x},\hat{\boldsymbol{x}})],\label{eq:r_d}
\end{align}
where $\alpha  \ge 0$ is the optimal Lagrange multiplier that corresponds to a distortion constraint $\mathbb{E}_{X,\hat{X}}[D(\boldsymbol{x},\hat{\boldsymbol{x}})] \le d$.\footnote{Formally, $\alpha$ is a function of $d$: $\alpha\equiv\alpha(d)$. However, we omit this dependence for notational brevity.} It can be shown that the problem in~\Eqref{eq:r_d} is equivalent to a double minimization problem over $p(\hat{\boldsymbol{x}})$, $p(\hat{\boldsymbol{x}}\mid \boldsymbol{x})$ (Lemma 10.8.1 of~\citet{cover1999elements}). This equivalence enables an alternating minimization algorithm~\citep{csiszar1984information} -- the Blahut–Arimoto (BA) algorithm~\citep{blahut1972computation,matz2004information} -- for solving $R$. In practice, numerical computation
of the rate-distortion function through the BA algorithm is often infeasible,
primarily due to lack of knowledge of the optimal support of $\hat{X}$. The \textit{Rate Distortion Finite Cardinality (RDFC)} formulation~\citep{rose1994mapping, banerjee2004information} simplifies the computation of $R$ by assuming finite support $\hat{\mathcal{X}}$ that is jointly optimized:
\begin{align}
& \min_{\mathcal{\hat{X}}, p(\hat{\boldsymbol{x}}\mid \boldsymbol{x})} \quad I(X,\hat{X}) + \alpha\, \mathbb{E}_{X, \hat{X}}[ D(\boldsymbol{x},\hat{\boldsymbol{x}})] \nonumber \\
& \text{ subject to: } \mid \mathcal{\hat{X}} \mid=k.\label{eq:r_d_f_c_lag}
\end{align} 
The RDFC objective in~\Eqref{eq:r_d_f_c_lag} can be greedily estimated by alternating optimization over $\mathcal{\hat{X}}$, $p(\hat{\boldsymbol{x}})$, $p(\hat{\boldsymbol{x}}\mid \boldsymbol{x})$ yielding a solution that is locally optimal~\citep{banerjee2004information}.

\section{Motivation}\label{sec:motivation}
The idea behind our approach is visualized in \Figref{fig:overview}. The crux of our approach is the observation that the variational marginal $q(\boldsymbol{z};\boldsymbol{\phi})$ in~\Eqref{eq:tau_z_centroid} and~\Eqref{eq:vib_i_x_z_e} encapsulates all encoders $p(\boldsymbol{z} \mid \boldsymbol{x}_i;\boldsymbol{\theta})$ of datapoints in $\mathcal{D}_{\train}$ encountered during training. To see this formally, we introduce a random variable $P_X$ defined by $X\sim p(\boldsymbol{x})$. The value of $P_X$ corresponding to $\boldsymbol{x}$ is the encoder's density $p(\boldsymbol{z}\mid \boldsymbol{x};\boldsymbol{\theta})$ (\Figref{d_train},~\ref{d_train_enc}). In other words, a value of $P_X$ is itself a probability distribution.
From proposition 1 of~\citet{banerjee2005clustering}, $ \mathbb{E}[P_X]$ is the unique centroid of encoders $p(\boldsymbol{z}\mid \boldsymbol{x}_i; \boldsymbol{\theta})$ with respect to \textit{any} Bregman divergence $D_{f}$ defined by a strictly convex and differentiable function $f$~\citep{bregman1967relaxation, brekelmans2022rhotau} (def.~\ref{def:bregman_divergence}):
\begin{align}
 \mathbb{E}[P_X] & = \frac{1}{N}\sum_{i=1}^{N}p(\boldsymbol{z} \mid \boldsymbol{x}_i; \boldsymbol{\theta}) \nonumber \\ & = \argmin_{q(\boldsymbol{z})} \frac{1}{N}\sum_{i=1}^{N} D_{f}(p(\boldsymbol{z} \mid \boldsymbol{x}_i; \boldsymbol{\theta}) , q(\boldsymbol{z})).
\label{eq:marginal_as_meanb} 
\end{align} 
We note that the upper bound in~\Eqref{eq:vib_i_x_z_e} emerges as a special case of the minimization objective in~\Eqref{eq:marginal_as_meanb}. This is because the Kullback-Leibler divergence is a Bregman divergence~\citep{azoury2001relative,nielsen2007bregman} with the negative entropy as the generator function $f$~\citep{frigyik2008functional, csiszar1995generalized}\footnote{We consider functional Bregman divergences, i.e., the generalization of Bregman divergence that acts on functions.}. 
Therefore, $q(\boldsymbol{z}; \boldsymbol{\phi})$ in the VIB can also be viewed as a variational centroid of the training datapoints' encoders (\Figref{d_train_enc_center}). In this work, we consider learnable parameters $\boldsymbol{\phi}$. Under this view, the role of the regularization term $I(Z,X;\boldsymbol{\theta})$ when upper bounded by~\Eqref{eq:tau_z_centroid} is now twofold: i) it both regularizes encoder $p(\boldsymbol{z} \mid \boldsymbol{x}; \boldsymbol{\theta})$ and ii) it learns a distributional centroid $q(\boldsymbol{z}; \boldsymbol{\phi})$ for encoders $p(\boldsymbol{z} \mid \boldsymbol{x}_i; \boldsymbol{\theta})$ of training examples $\boldsymbol{x}_i$.

For complex data, it usually does not suffice to represent $\{p(\boldsymbol{z} \mid \boldsymbol{x}_i; \boldsymbol{\theta})\}_{i=1}^{N}$ by a single distribution  $q(\boldsymbol{z}; \boldsymbol{\phi})$.
Therefore, we will need to learn a collection (codebook) of $k$ centroids $\{q_{\kappa}(\boldsymbol{z}; \boldsymbol{\phi})\}_{\kappa=1}^{k}$\footnote{$\boldsymbol{\phi}$ will represent the joint set of parameters of all centroids $q_\kappa$. } (\Figref{d_train_enc_center_codebook}). In Section~\ref{sec:model}, we formalize how such a set of distributions can be learned and used to effectively quantify distance from $\mathcal{D}_\train$.

\section{Distance Aware Bottleneck}\label{sec:model}
\subsection{Model}\label{sec:ua_ib}
In this section, we present the \textit{Distance Aware Bottleneck (DAB)}: An IB problem with a complexity constraint that regularizes the network \textit{and} renders the network distance-aware given a compressed representation of $\mathcal{D}_\train$. 
We keep an information-geometric interpretation of this representation. In this case, the features of $\boldsymbol{x}$ and the codes used for computing distance from $\mathcal{D}_\train$ lie in the parameter space of a distributional family $\mathcal{P}$\footnote{For example, for the family $\mathcal{P}=\{p(\boldsymbol{z};\boldsymbol{\theta})\mid \boldsymbol{\theta} \in \boldsymbol{\Theta} \}$ of $d-$ dimensional Normal distributions, the parameter space is $\boldsymbol{\Theta}=\{\mathbb{R}^d\times \mathrm{Sym}^+(d;\mathbb{R})\}$ with $\mathrm{Sym}^+(d;\mathbb{R})$ the set of $d\times d$ symmetric, positive definite matrices.} (\Figref{d_train_enc}). As we will see in Section~\ref{sec:uncertainty_dab}, the characterization of datapoints at a distributional granularity provides the model with deterministic uncertainty estimates. Moreover, we argue that an input $\boldsymbol{x}$ is better characterized by its encoder $p(\boldsymbol{z}\mid \boldsymbol{x};\boldsymbol{\theta})$. This is because standard Euclidean distances might disregard aspects of data that are essential for characterizing distance from $\mathcal{D}_\train$. In Section~\ref{sec:experiments}, we empirically confirm our hypothesis.  

The mathematical construction of our work was alluded in Section~\ref{sec:motivation} when we introduced random variable $P_X$. $P_X$ is defined by $X$ and takes as value the distribution  $p(\boldsymbol{z} \mid \boldsymbol{x};\boldsymbol{\theta})$, i.e., the encoder, as we sample $X\sim p(\boldsymbol{x})$. 
In its empirical form over a finite number of $N$ training datapoints $\mathcal{D}_\train$, the distribution of  $P_X$ is a discrete \textit{distribution over distributions}: $P_X$ is discrete taking values in the set $\mathcal{P}_X=\{p(\boldsymbol{z} \mid \boldsymbol{x}_i;\boldsymbol{\theta})\}_{i=1}^{N}$ with probability $1/N$. We also define a random variable ${Q}$. By fixing the number $k$ of distributional centroids, ${Q}$ takes values $[q_1(\boldsymbol{z};\boldsymbol{\phi}), q_2(\boldsymbol{z};\boldsymbol{\phi}), \dots, q_k(\boldsymbol{z};\boldsymbol{\phi})]$ following distribution $\pi=[\pi(1), \pi(2), \dots, \pi(k)]$. We will refer to its support set $\mathcal{Q}=\{q_\kappa(\boldsymbol{z};\boldsymbol{\phi})\}_{\kappa=1}^{k}$ as the codebook. $\pi_{\boldsymbol{x}}$ is the conditional assignment probabilities of encoder $p(\boldsymbol{z} \mid \boldsymbol{x};\boldsymbol{\theta})$ to the centroids such that $\pi_{\boldsymbol{x}}=[\pi_{\boldsymbol{x}}(1), \pi_{\boldsymbol{x}}(2), \dots, \pi_{\boldsymbol{x}}(k)]$ with:
\begin{equation}
    \pi_{\boldsymbol{x}}(\kappa)=p({Q}=q_\kappa(\boldsymbol{z};\boldsymbol{\phi}) \mid P_X =p(\boldsymbol{z} \mid \boldsymbol{x};\boldsymbol{\theta})).
    \label{eq:conditional_pq}
\end{equation}
Compression of $\mathcal{D}_\train$ is phrased as a RDFC problem (\Eqref{eq:r_d_f_c_lag}) for the source of encoders $P_{X}$ using the source code $Q$: 
\begin{align}
 & R^k(\boldsymbol{\theta}) = \min_{\substack{{\mathcal{Q}}, \pi_{\boldsymbol{x}}}} \quad \mathcal{L}_{\mathrm{RDFC}} \quad \text{ subject to:} 
 \mid {\mathcal{Q}} \mid=k, \text{ where: }  \label{eq:r_d_f_c_dtrain}  \\
& \mathcal{L}_{\mathrm{RDFC}} \triangleq  
 \nonumber \\ 
& I(P_X,{Q};\boldsymbol{\theta},\boldsymbol{\phi}) +  \alpha \mathbb{E}_{P_X,{Q}}\big[ D(p(\boldsymbol{z} \mid \boldsymbol{x};\boldsymbol{\theta}),q_\kappa(\boldsymbol{z};\boldsymbol{\phi}))\big]. 
\label{eq:r_d_f_c_dtrain_loss}
\end{align}
At this point, we underline that the source of encoders $P_X$ depends on $\boldsymbol{\theta}$. Since centroids $q_\kappa(\boldsymbol{z};\boldsymbol{\phi})$ are used to quantize the set of encoders in $\mathcal{D}_\train$, we will also call them code distributions. Albeit in this work we investigate only the behavior of the Kullback-Leibler divergence, the distortion function $D$ in~\Eqref{eq:r_d_f_c_dtrain_loss} can be $\textit{any}$ statistical distance measure between two \textit{probability distributions}.

Optimizing with respect to the support set $\mathcal{Q}$ amounts to optimizing with respect to parameters $\boldsymbol{\phi}$ of codes $q_\kappa(\boldsymbol{z};\boldsymbol{\phi})$. Therefore, the problem in~\Eqref{eq:r_d_f_c_dtrain} can be written as:
\begin{align}
 R^k(\boldsymbol{\theta}) = \min_{\substack{\boldsymbol{\phi}, \pi_{\boldsymbol{x}}}} \quad \mathcal{L}_{\mathrm{RDFC}}, 
\label{eq:r_d_f_c_dtrain_ii}
\end{align}
where $\mathcal{L}_{\mathrm{RDFC}}$ is defined in~\Eqref{eq:r_d_f_c_dtrain_loss}.
DAB replaces the rate term $I(Z,X;\boldsymbol{\theta})$ of the IB (~\Eqref{eq:ib_ii}) with the achievable rate $R^k(\boldsymbol{\theta})$~(\Eqref{eq:r_d_f_c_dtrain_ii}). Formally, a DAB of cardinality $k$ is defined as:
\begin{align}
& \min_{\boldsymbol{\theta}} \quad -I(Z,Y;\boldsymbol{\theta}) +\beta R_{}^k(\boldsymbol{\theta})  \Longleftrightarrow  
\min_{\boldsymbol{\theta}} \min_{\boldsymbol{\phi},\pi_{\boldsymbol{x}}} \quad \mathcal{L}_{\mathrm{DAB}}, \nonumber \\  & \text{ where: }\mathcal{L}_{\mathrm{DAB}} 
\triangleq -I_{}(Z,Y;\boldsymbol{\theta}) + \beta I(P_X, Q;\boldsymbol{\theta},\boldsymbol{\phi}) + \nonumber \\ & \alpha \beta \ \mathbb{E}_{P_X,{Q}}\big[ D(p(\boldsymbol{z} \mid \boldsymbol{x};\boldsymbol{\theta}),q_\kappa(\boldsymbol{z};\boldsymbol{\phi}))\big].
\label{eq:ua_ib}
\end{align}
Training the network with the loss function $\mathcal{L}_\mathrm{DAB}$ encourages encoders $p(\boldsymbol{z} \mid \boldsymbol{x};\boldsymbol{\theta})$ whose samples $\boldsymbol{z}$ are informative about output $\boldsymbol{y}$ while staying statistically close to codes $q_\kappa(\boldsymbol{z};\boldsymbol{\phi})$. To get a better insight into~\Eqref{eq:ua_ib}, we consider two edge cases. In the case of a single code, i.e., $k=1$, with $D$ taken as the Kullback-Leibler divergence,~\Eqref{eq:ua_ib} is equivalent to the empirical form (\Eqref{eq:vib_i_x_z_e}) of the VIB~\citep{alemi2016deep} (\Eqref{eq:vib_loss}) with regularization coefficient $\alpha \times \beta$. For $k=N$, the optimal codes would correspond to training datapoints' encoders: $q_\kappa(\boldsymbol{z};\boldsymbol{\phi}) = p(\boldsymbol{z}\mid \boldsymbol{x}_\kappa;\boldsymbol{\theta})$ yielding zero compression (and regularization).

We note that DAB's objective (\Eqref{eq:ua_ib}) uses two separate terms for accuracy and for controlling the distance of training datapoints from the codebook. Such formulation 
enables DAB to choose between pulling correctly classified datapoints close to the codebook (being less uncertain) or pushing misclassified datapoints away (more uncertain), ultimately leading to better calibration (Section~\ref{sec:calibration_experiments}). 

As in the standard VIB~\citep{alemi2016deep}, $I(Z,Y;\boldsymbol{\theta})$ can be estimated by the lower bound $\mathbb{E}_{X,Y,Z}[\log m(\boldsymbol{y}\mid \boldsymbol{z};\boldsymbol{\theta})]$ that is maximized with respect to variational decoder $m(\boldsymbol{y}\mid \boldsymbol{z};\boldsymbol{\theta})$ (\Eqref{eq:vib_loss}). We emphasize that, in this work, the decoder does not utilize model's proposed distance for eventually \textit{improving} its predictions. In DAB, this could be achieved by designing a \textit{stochastic} decoder that induces variance proportionate to the estimated distance (uncertainty) in its final prediction. Such a decoder could be viewed as a distance-aware epinet~\citep{osband2021epistemic} and its design is left as future work.
\begin{figure*}[!ht]
     \centering
     \begin{subfigure}[b]{0.49\textwidth}
         \centering         \includegraphics[width=0.9\linewidth]{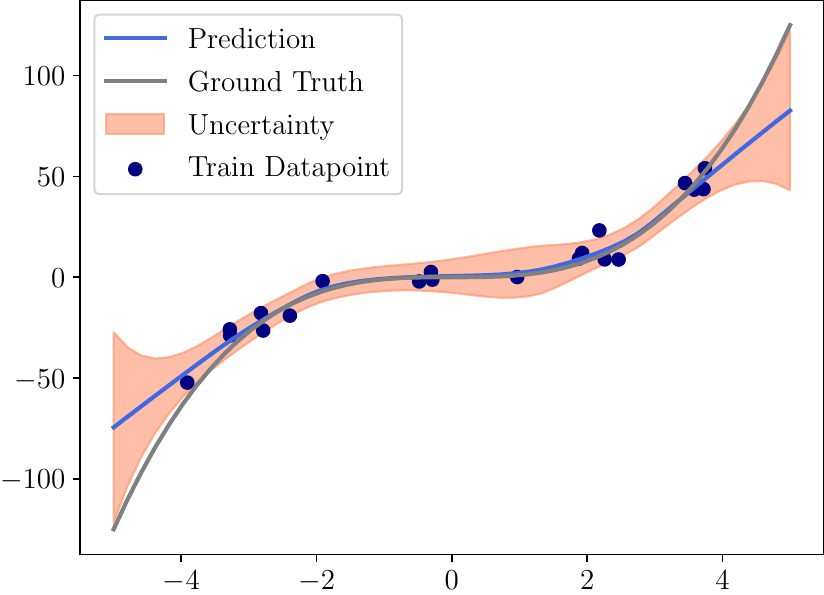}
         \caption{A single cluster of training data points.}
         \label{fig:regression_uniform}
     \end{subfigure}
     \hfill
     \begin{subfigure}[b]{0.49\textwidth}
         \centering         \includegraphics[width=0.9\linewidth]{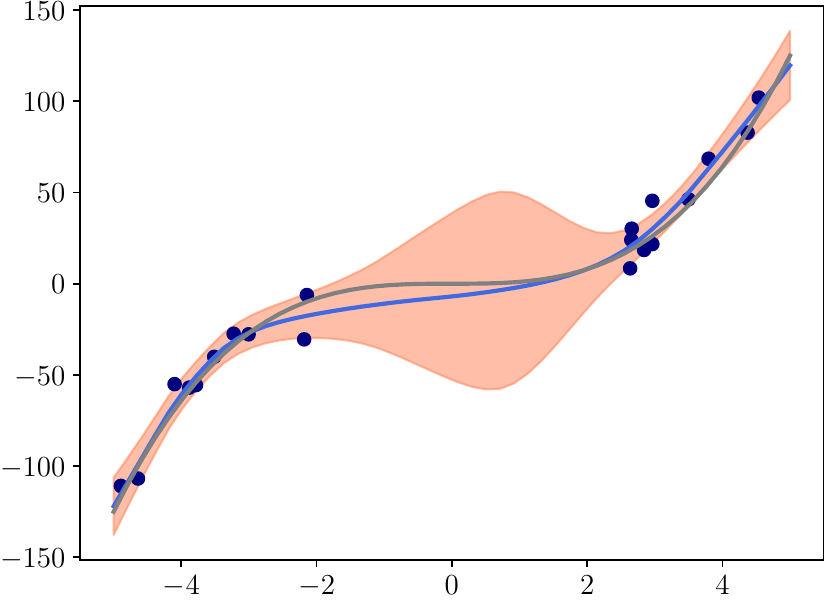}
         \caption{Two clusters of training data points.}
         \label{fig:regression_clusters}
     \end{subfigure}
\caption{\textbf{Uncertainty estimation on noisy regression tasks.} We consider the Kullback-Leibler divergence as the distortion function in the uncertainty score of~\Eqref{eq:distance_3}. A larger distance from the training datapoints (blue dots) is consistently quantified by higher uncertainty (width of pink area). Moreover, the true values lie well within $\pm 2 \times $ the proposed uncertainty score around the predictive mean. }
\label{fig:uncertainty_regression}
\end{figure*}
\subsection{Learning Algorithm}\label{sec:learning_algorithm}
The optimization problem of~\Eqref{eq:ua_ib} can be solved by alternating minimizations~\citep{banerjee2004information}. We note that $I(P_X,{Q};\boldsymbol{\theta},\boldsymbol{\phi})$ in~\Eqref{eq:ua_ib} is tractable since $P_X, {Q}$ are discrete random variables taking $N$ (size of training dataset) and $k$ (size of codebook) possible values, respectively. At each step, a single block of parameters is optimized. The most recent value is used for the parameters that are not optimized at the step. The internal minimization step corresponds to the computation of a RDFC (\Eqref{eq:r_d_f_c_lag}). The minimization steps are summarized as follows:
\[\rotatebox{90}{\text{\hspace{-0.18in}repeat}}\begin{cases}
t.&\text{Update decoder } m(\boldsymbol{y} \mid \boldsymbol{z};\boldsymbol{\theta}), \text{encoder } \\ & p(\boldsymbol{z} \mid \boldsymbol{x};\boldsymbol{\theta}):   \boldsymbol{\theta} \leftarrow \boldsymbol{\theta} - \eta_{\boldsymbol{\theta}} \nabla_{\boldsymbol{\theta}} \mathcal{L}_{\mathrm{DAB}}  \\
t+1.&\text{Update  }  \pi_{\boldsymbol{x}} \text{ from~\Eqref{eq:vib_ii_e_step_sol}}\\ 
t+2.&\text{Update centroids } q_\kappa(\boldsymbol{z};\boldsymbol{\phi}): \\  &\boldsymbol{\phi} \leftarrow \boldsymbol{\phi} - \eta_{\boldsymbol{\phi}} \nabla_{\boldsymbol{\phi}} \mathcal{L}_{\mathrm{DAB}}\\ 
t+3.&\text{Update } \pi \text{ 
 from~\Eqref{eq:prior_cluster_prob_update}}
\end{cases}\]
Steps $t+1$, $t+3$ are computationally cheap and can be performed analytically with a single forward pass:
\begin{align}
\pi_{{\boldsymbol{x}}}(\kappa) &= \frac{\pi(\kappa)}{\mathcal{Z}_{\boldsymbol{x}}(\alpha)}\exp\big(-\alpha D\left(p(\boldsymbol{z} \mid \boldsymbol{x};\boldsymbol{\theta}),q_\kappa(\boldsymbol{z};\boldsymbol{\phi})\right) \big), 
\label{eq:vib_ii_e_step_sol}  \\
 \pi(\kappa)&=\frac{1}{N}\sum_{i=1}^{N} \pi_{\boldsymbol{x}_i}(\kappa),
\label{eq:prior_cluster_prob_update}
\end{align}
where $\mathcal{Z}_{\boldsymbol{x}}(\alpha)$ is the partition function: $\mathcal{Z}_{\boldsymbol{x}}(\alpha) = \sum_{\kappa=1}^{k} \pi(\kappa) \exp\left(-\alpha D\bigl(p(\boldsymbol{z} \mid \boldsymbol{x};\boldsymbol{\theta}),q_\kappa(\boldsymbol{z};\boldsymbol{\phi}) \bigr)\right)$.

$\pi_{\boldsymbol{x}}$ in~\Eqref{eq:vib_ii_e_step_sol} (see also Eq. 10.124 of~\citet{cover1999elements}) assigns higher probability to the centroid statistically closer in terms of $D$ to the encoder of $\boldsymbol{x}$. $\pi$ in~\Eqref{eq:prior_cluster_prob_update} is derived in Lemma 10.8.1 of~\citet{cover1999elements} and is the marginal of $\pi_{\boldsymbol{x}}$.
Steps $t$, $t+2$ require back-propagation and correspond to gradient descent steps. The pseudocode of our method (Algorithm~\ref{alg:training_ig_vib}) along with a practical implementation for mini-batch training is given in Appendix~\ref{sec:training_vib}.
\subsection{Uncertainty Quantification in the IB}\label{sec:uncertainty_dab}
The solution to the problem of~\Eqref{eq:ua_ib} provides us with codes $q_\kappa(\boldsymbol{z};\boldsymbol{\phi})$ for encoders in $\mathcal{D}_{\train}$ (\Figref{d_train_enc_center_codebook}). Large distance from these codes signals an unfamiliar input $\boldsymbol{x}$ for which the network should be less confident when predicting $\boldsymbol{y}$ (\Figref{fig:uncertainty_regression}). Formally, we define uncertainty over datapoint $\boldsymbol{x}$ as the conditional expected distortion (from last term in~\Eqref{eq:ua_ib}):
\begin{align}
& \mathrm{uncertainty}\left(\boldsymbol{x}\right) = \mathrm{distance}\left(\boldsymbol{x},\mathcal{D}_{\train}\right) \nonumber \\ & = \mathbb{E}_{Q\mid P_X=p(\boldsymbol{z} \mid \boldsymbol{x};\boldsymbol{\theta})
}\left[ D\big(p(\boldsymbol{z} \mid \boldsymbol{x};\boldsymbol{\theta}),q_\kappa(\boldsymbol{z};\boldsymbol{\phi})\big)\right].
\label{eq:distance_3}
\end{align}
The distribution of $Q$ in the expectation of~\Eqref{eq:distance_3} conditioned on encoder $p(\boldsymbol{z} \mid \boldsymbol{x};\boldsymbol{\theta})$ (also defined in~\Eqref{eq:conditional_pq}) is given in~\Eqref{eq:vib_ii_e_step_sol}. The expectation in~\Eqref{eq:distance_3} is taken over a finite number of values $k$. Under certain choices of $D$ and distribution families, the uncertainty score of~\Eqref{eq:distance_3} can be computed \textit{deterministically} with a \textit{single forward pass} of the network without requiring Monte Carlo approximations. In this work, we consider the Kullback-Leibler divergence  as the distortion function and multivariate Gaussians for the codes and the encoder.
\begin{table*}[!ht]
\centering
\caption{\textbf{OOD performance of baselines trained on the CIFAR-10 dataset.} We consider two OOD datasets for distinguishing from CIFAR-10 with varying levels of difficulty: SVHN (\rebuttal{far} OOD dataset) and CIFAR-100 (\rebuttal{near} OOD dataset). In bold are top results (within standard error). The horizontal line separates ensembles from DUMs. \rebuttal{Only methods with the same background color can be directly compared with each other. }The performance of all models is averaged over 10 random seeds. DAB outperforms all baselines in all tasks with respect to all metrics. \rebuttal{DA stands for distance aware. R indicates whether model has been/ can be applied to regression tasks. PR indicates whether method can be applied to a pre-trained network.}}
\begin{adjustbox}{width=0.99\textwidth,center}
{\Huge
\begin{tabular}{|cc|c|c|c|c|c|c|}
        \hline
\multicolumn{1}{|c|}{\multirow{2}{*}{\textbf{Method}}}    & {\multirow{2}{*}{\textbf{DA}}} & {\multirow{2}{*}{\textbf{R}}}  & {\multirow{2}{*}{\textbf{PR}}}     &
\multicolumn{2}{c|}{\textbf{SVHN}}    & \multicolumn{2}{c|}{\textbf{CIFAR-100}}                     \\ \cline{5-8}
\multicolumn{1}{|c|}{}     &   &  & & \textbf{AUROC} $\uparrow$  & \textbf{AUPRC}   $\uparrow$      & \textbf{AUROC} $\uparrow$ & \textbf{AUPRC}         $\uparrow$                                                                \\ \hline \hline

\rowcolor{amaranth}
\multicolumn{1}{|c|}{\begin{tabular}[c]{@{}c@{}} Deep Ensemble of 5~\citep{lakshminarayanan2017simple}\end{tabular}} & \xmark &  \cmark & \Huge{\textbf{--}} & $0.97\phantom{0}\pm0.004$           &  $0.984\pm0.003$               & $0.916 \pm 0.001$           & 
 $0.902 \pm 0.002$        
  
  \\ \hline \hline 

\rowcolor{LightCyan}
\multicolumn{1}{|c|}{\begin{tabular}[c]{@{}c@{}}Deterministic~\citep{zagoruyko2016wide}\end{tabular}}    &   \xmark &  \cmark & \Huge{\textbf{--}} & $0.956 \pm 0.004$           & $0.976 \pm 0.004$       & $0.892\pm{0.002}$          &  $0.88\pm 0.002$\\[-1pt] 
\hhline{>{\arrayrulecolor{LightCyan}}----->{\arrayrulecolor{black}}|}\noalign{\vskip-1pt}
\rowcolor{LightCyan}
\multicolumn{1}{|c|}{\begin{tabular}[c]{@{}c@{}} \rebuttal{DDU} ~\citep{Mukhoti_2023_CVPR}\end{tabular}} & \cmark &    \xmark & \xmark & \rebuttal{$0.981 \pm 0.002$}         & \rebuttal{$0.966 \pm 0.003$}                 & \rebuttal{$0.894 \pm 0.001$}             &  \rebuttal{$0.901 \pm 0.001$} \\[-1pt] 
\hhline{>{\arrayrulecolor{LightCyan}}----->{\arrayrulecolor{black}}|}\noalign{\vskip-1pt}
\rowcolor{LightCyan}
\multicolumn{1}{|c|}{\begin{tabular}[c]{@{}c@{}} DUQ~\citep{van2020uncertainty} \end{tabular}}& \cmark &  \xmark & \xmark & $0.940 \pm 0.003$           & $0.956 \pm 0.006$                  &  $0.817 \pm 0.012$            & $0.826 \pm 0.006$ \\[-1pt] 
\hhline{>{\arrayrulecolor{LightCyan}}----->{\arrayrulecolor{black}}|}\noalign{\vskip-1pt}
\rowcolor{LightCyan} 
\multicolumn{1}{|c|}{\begin{tabular}[c]{@{}c@{}} DUE~\citep{van2021feature} \end{tabular}} & \cmark &  \cmark & \xmark & $0.958 \pm 0.005$          & $0.968 \pm 0.015$               & $0.871\pm0.011$           & $0.865\pm0.011$ \\[-1pt] 
\hhline{>{\arrayrulecolor{LightCyan}}----->{\arrayrulecolor{black}}|}\noalign{\vskip-1pt}
\rowcolor{LightCyan}  
\multicolumn{1}{|c|}{\begin{tabular}[c]{@{}c@{}} SNGP~\citep{liu2020simple,JMLR:v24:22-0479} \end{tabular}} & \cmark & \cmark  & \xmark & $0.971 \pm 0.003$           & $0.987 \pm 0.001$               & $0.908\pm0.003$           &  $0.907 \pm 0.002$  \\[-1pt] 
\hhline{>{\arrayrulecolor{LightCyan}}----->{\arrayrulecolor{black}}|}\noalign{\vskip-1pt}
\rowcolor{LightCyan}
\multicolumn{1}{|c|}{\begin{tabular}[c]{@{}c@{}}  vanilla VIB ~\citep{alemi2018uncertainty} \end{tabular}} & \cmark & \cmark &  \cmark & $0.715 \pm 0.081$          & $0.869 \pm 0.039$               & $0.663 \pm 0.045$             & $0.701 \pm 0.034$     \\[-1pt] 
\hhline{>{\arrayrulecolor{LightCyan}}----->{\arrayrulecolor{black}}|}\noalign{\vskip-1pt}
\rowcolor{LightCyan} 
\multicolumn{1}{|c|}{\begin{tabular}[c]{@{}c@{}} \textbf{DAB (ours)} \end{tabular}} & \cmark & \cmark  &  \cmark & $\boldsymbol{0.986 \pm 0.004}$          & $\boldsymbol{0.994 \pm 0.002}$               & $\boldsymbol{0.922 \pm 0.002}$             & $\boldsymbol{0.915 \pm 0.002}$ \\
\bottomrule
\end{tabular}}
\end{adjustbox}
\label{tab:ood_baselines}
\end{table*}
\subsection{Connections with Maximum Likelihood Mixture Estimation.}
\begin{table*}[!htbp]
\centering
\caption{\textbf{Accuracy and model size of OOD baselines}. Although we use a narrow bottleneck (8-dimensional latent variables), the accuracy of our model is not compromised compared to other deterministic uncertainty baselines. This is because 10 distributional codes can sufficiently represent the training dataset without diminishing the regularization effect and distance awareness of the rate-distortion constraint. More importantly, DAB can inject uncertainty awareness into the model with a minor model size overhead.}
\centering
\scalebox{0.95}{
\begin{tabular}{|c | c |  c|}
 \hline
 \textbf{Method}  & \textbf{Accuracy $\uparrow$} &  \textbf{\# Trainable Parameters $\downarrow$} \\ [0.5ex] 
 \hline \hline
 \rowcolor{amaranth}
  Deep Ensemble of 5~\citep{lakshminarayanan2017simple} & \textbf{96.6\%}  & $182,395,970$     \\ \hline \hline
\rowcolor{lightkhaki}
\rowcolor{LightCyan} 
 Deterministic~\citep{zagoruyko2016wide} & $96.2\% $   & \textbf{36,479,194}  \\[-1pt] 
\hhline{>{\arrayrulecolor{LightCyan}}--->{\arrayrulecolor{black}}|}\noalign{\vskip-1pt}
\rowcolor{LightCyan}
 \rowcolor{LightCyan} 
 \rebuttal{DDU~\citep{Mukhoti_2023_CVPR}} & \rebuttal{$95.9\%$}& \rebuttal{\textbf{36,479,194}} \\[-1pt] 
\hhline{>{\arrayrulecolor{LightCyan}}--->{\arrayrulecolor{black}}|}\noalign{\vskip-1pt}
\rowcolor{LightCyan}
  DUQ~\citep{van2020uncertainty} & 94.9\% & 40,568,784  \\[-1pt] 
\hhline{>{\arrayrulecolor{LightCyan}}--->{\arrayrulecolor{black}}|}\noalign{\vskip-1pt}
\rowcolor{LightCyan}
 DUE~\citep{van2021feature} & 95.6\% & 36,480,314 \\[-1pt] 
\hhline{>{\arrayrulecolor{LightCyan}}--->{\arrayrulecolor{black}}|}\noalign{\vskip-1pt}
\rowcolor{LightCyan}
SNGP~\citep{liu2020simple, JMLR:v24:22-0479} & 95.9\% & 36,483,024  \\[-1pt] 
\hhline{>{\arrayrulecolor{LightCyan}}--->{\arrayrulecolor{black}}|}\noalign{\vskip-1pt}
\rowcolor{LightCyan}  
vanilla VIB~\citep{alemi2018uncertainty} & 95.9\%  & 36,501,042 \\[-1pt] 
\hhline{>{\arrayrulecolor{LightCyan}}--->{\arrayrulecolor{black}}|}\noalign{\vskip-1pt}
\rowcolor{LightCyan}
DAB (ours)  & 95.9\% & 36,501,114  \\ [1ex]
\bottomrule
\end{tabular}}
\label{table:OoD_params_accuracy}
\end{table*}
Limited work has sought connections between Maximum Likelihood
Mixture Estimation (MLME) and computation of the rate-distortion function.~\citet{banerjee2004information} prove the equivalence between these two problems for Bregman distortions and exponential families. In this case and under the assumption of constant variance for all mixture's components, learning the support set in RDFC corresponds to learning the mixture means. For MLME on parametric distributions, i.e., encoders, a straightforward way to leverage this connection is to define the ``sample space'' of the MLME as the ``parameter space'' of encoder's distribution family. Similarly, training with a mixture (for the marginal) VIB~\citep{alemi2018uncertainty} entails an MLME problem where the data points (to be clustered) are latent samples drawn from encoders. To get better insights, in Appendix~\ref{sec:vib_for_clustering} we anatomize the loss function.  As we will see in Table~\ref{tab:ood_baselines}, a full statistical description of encoders (instead of using a finite--single in the experiment-- number of its samples) along with the proposed alternating minimization algorithm that guides assignments to centroids during training, helps DAB capture uncertainty \textit{exactly} with a \textit{single} forward pass. From a theoretical standpoint,  deriving rigorous connections between the two problems would be interesting for future work.
\section{Experiments}\label{sec:experiments}
\subsection{Synthetic Example}\label{sec:regression_experiments}
Before we compare with other DUMs, we first need to sanity-check the proposed model and learning algorithm. Synthetic experiments are handy for this task since they allow us to test the behavior of the model under different conditions. In this work, we apply DAB to synthetic regression tasks. In \Figref{fig:uncertainty_regression}, we visualize the predictive uncertainty, i.e., the value of the distortion function in~\Eqref{eq:distance_3}.  We verify that as we move far away from the data, the model's confidence and accuracy decline. We consider two cases of training datasets. \Figref{fig:regression_uniform} follows the original set-up of~\citet{hernandez2015probabilistic}.  \Figref{fig:regression_clusters} is a harder variant of the first problem~\citep{foong2019between} and a typical failure case of many uncertainty-aware methods. ~\citet{wilson2020bayesian} show that many methods end up being overconfident in the area between the clusters of the training datapoints. We provide details for the dataset generation and the training setup in Appendix~\ref{sec:synthetic_setup}.
\subsection{DAB for Out-of-Distribution Detection}\label{sec:ood_experiments}
To compare the uncertainty quality of different models, we evaluate their performance in distinguishing between the test sets of CIFAR-10 and OOD datasets. \rebuttal{We consider two OOD datasets of increasing difficulty: SVHN~\citep{svhn} (far OOD/ easy task) and CIFAR-100 (near OOD/ difficult task).} We compare DAB against a deterministic baseline, 
an ensemble baseline, a VIB with a mixture marginal trained with gradient descent, and the most competitive DUMs. All approaches do not require auxiliary OOD datasets either to train the models or to tune hyperparameters. \rebuttal{In Table~\ref{tab:ood_baselines}, we also outline some  high-level properties of these models.}  For all methods, we use Wide ResNet 28-10~\citep{zagoruyko2016wide} as the base network. DAB and VIB are inserted right before output's dense layer. For both, we use 8-dimensional latent features. For DAB, we consider $k=10$ centroids. For VIB, we consider a mixture with $10$ components. We use the Kullback-Leibler divergence as the distortion function in~\Eqref{eq:distance_3}. For fair comparisons, we train the IB and the Gaussian Process models with a single sample. Further training and evaluation configurations are given in Appendix~\ref{sec:setup_ood_experiments}. 

As shown in Table~\ref{tab:ood_baselines}, DAB outperforms all baselines in terms of AUPRC and AUROC (the positive label in the binary OOD classification corresponds to the OOD images). We confirm that distances in distribution space are more informative compared to Euclidean distances. In Table~\ref{table:OoD_params_accuracy}, we report the accuracy and the size of the baselines. We note here that the accuracy of our model is on par with that of other DUMs. Importantly, DAB only minimally increases the single network's size while rendering it uncertainty-aware. The additional parameters correspond to centroids’ parameters and DAB's dense layers implementing the head of the encoder.
\begin{figure*}[!htbp]
\centering
\caption{\textbf{Qualitative evaluation of encoders' codebook.} We visualize the number of CIFAR-10 test data points per class assigned to each centroid during training. We assign a data point to the centroid with the smallest statistical distance from its encoder. Each centroid progressively attracts data points of the same class. Moreover, all centroids are assigned a non-zero number of test datapoints. Therefore, the centroids are useful for better explaining both train and previously unseen, test data points.}
\label{fig:10_clusters}
\includegraphics[width=0.70\textwidth]{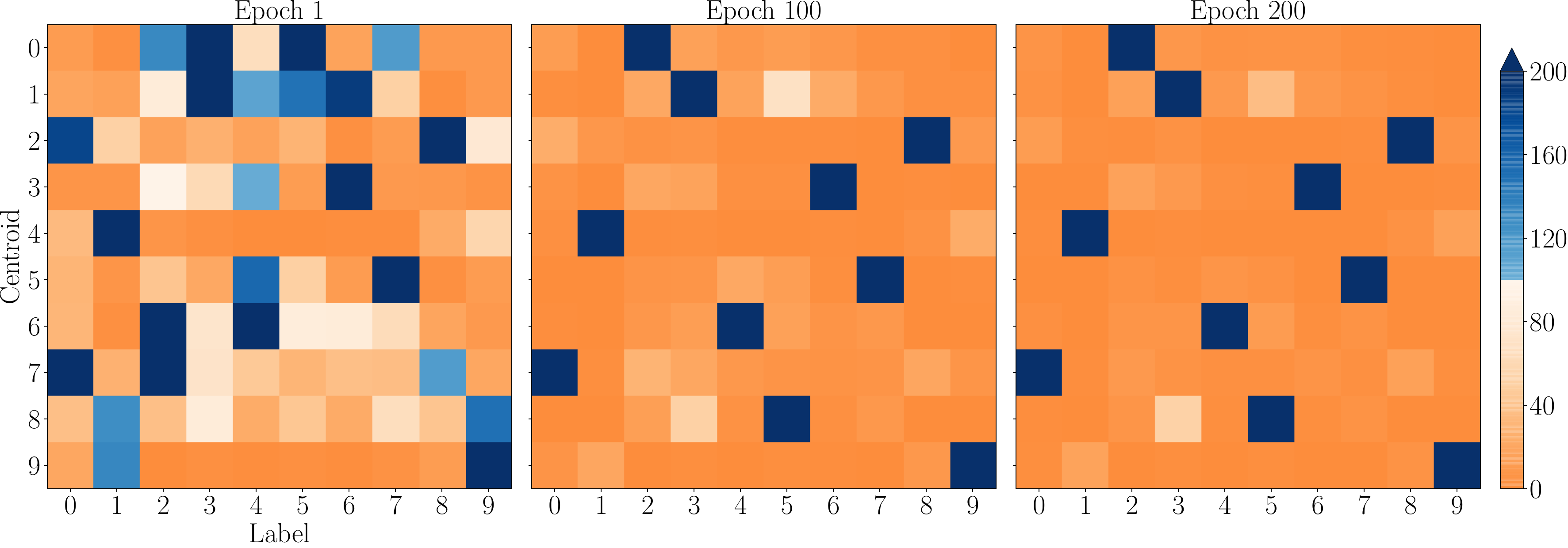}
\end{figure*}

In Figure~\ref{fig:dab_distance_awareness}, we visualize distances from the learned codebook. The in-distribution, test datapoints that are correctly classified lie within the statistical balls (\Figref{d_train_enc_center}) defined by codebook's centroids and $\mathcal{D}_\train$. The in-distribution, misclassified test datapoints are clearly separated from the training support but closer to the codebook than the near OOD. This, as we will see in the next section, qualitatively justifies DAB's strong calibration (Tables~\ref{table:calibration_score_},~\ref{table:imagenet}). Lastly, near OOD datapoints  are closer to the codebook than far OOD datapoints.

To qualitatively inspect the learning algorithm of Section~\ref{sec:learning_algorithm}, in \Figref{fig:10_clusters} we plot the number of test datapoints per class represented by each centroid $q_\kappa(\boldsymbol{z};\boldsymbol{\phi})$ in different training phases. We note that the class counts refer to the \textit{true} and not the \textit{predicted} class. As training proceeds, DAB learns similar latent features for datapoints that belong to the same class and pushes them closer to the same centroid. Certain centroids, however, conflate test datapoints of different classes. For example, a small number of test datapoints of class 3 (cat) are assigned to (are closest to) the centroid whose majority class is 5 (dog). Assignment to the wrong centroid presages model's misprediction for these datapoints. In contrast, we observed in an analogous figure using the training data points that they are completely separated by the centroids (the colormap displays only blue squares). However, this might not be the case in training datasets containing corrupted labels~\citep{northcutt2021confident}.

\begin{table*}[!ht]
\caption{\textbf{Calibration AUROC of DUMs for misclassification prediction on CIFAR-10.} We examine how well the model can predict it will be wrong from its estimated uncertainty. The problem is framed as a binary classification task with the positive label indicating a mistake. DAB closes the gap between DUMs and ensembles~\citep{pmlr-v162-postels22a}.}
\centering
\begin{adjustbox}{width=0.8\textwidth,center}
{\Huge
\begin{tabular}{|c | c | c|} 
 \hline
 \textbf{Method}  &  \textbf{ Uncertainty Description} &   \textbf{Calibration AUROC $\uparrow$} \\ [0.5ex] 
 \hline \hline
 \cellcolor{amaranth} Deep Ensemble of 5~\citep{lakshminarayanan2017simple} &  \cellcolor{amaranth}  Gibbs softmax entropy &  \cellcolor{amaranth}$\boldsymbol{0.951 \pm 0.001}$   \\ \hhline{>{\arrayrulecolor{amaranth}}->{\arrayrulecolor{black}}->{\arrayrulecolor{black}}-}
 \hline \hline
\rowcolor{LightCyan}
   \rowcolor{LightCyan}
   \rebuttal{DDU~\citep{Mukhoti_2023_CVPR}} & \rebuttal{ Softmax entropy}  & \rebuttal{$0.632 \pm 0.009$}  \\[-1pt] 
\hhline{>{\arrayrulecolor{LightCyan}}--->{\arrayrulecolor{black}}|}\noalign{\vskip-1pt} 
    \rowcolor{LightCyan}
  DUQ~\citep{van2020uncertainty} &  Euclidean distance ($l_2$-norm) from centroid  & $0.889 \pm 0.013$  \\[-1pt] 
\hhline{>{\arrayrulecolor{LightCyan}}--->{\arrayrulecolor{black}}|}\noalign{\vskip-1pt}
   \rowcolor{LightCyan}
  DUE~\citep{van2021feature} &  Posterior variance & $0.856 \pm 0.026$ \\[-1pt] 
\hhline{>{\arrayrulecolor{LightCyan}}--->{\arrayrulecolor{black}}|}\noalign{\vskip-1pt}
  \rowcolor{LightCyan}
 SNGP~\citep{liu2020simple,JMLR:v24:22-0479} &  Dempster-Shafer uncertainty &  $0.897 \pm 0.006$   \\[-1pt] 
\hhline{>{\arrayrulecolor{LightCyan}}--->{\arrayrulecolor{black}}|}\noalign{\vskip-1pt}  
 \rowcolor{LightCyan}
 \textbf{DAB (ours)}  &  \textbf{Statistical distance (KL) from centroid}  &    $\boldsymbol{ 0.930 \pm  0.003}$  \\ [1ex] 
 \hline
\end{tabular}}
\end{adjustbox}
\label{table:calibration_score_}
\end{table*}
\begin{table*}[!ht]
\caption{\textbf{\rebuttal{DAB's performance on ImageNet-1K.}} DAB outperforms ensembles at predicting misclassifications. Moreover, it can better distinguish ImageNet-O from ImageNet images. More importantly, it does so with significantly fewer trainable parameters. The performance of all models is averaged over 4 random seeds.
} 
\centering
\begin{adjustbox}{width=0.99\textwidth,center}
{\Huge
\begin{tabular}{ |c  | c | c | c|  c| c| }
 \hline
  \textbf{Method}  & \textbf{Uncertainty Description}  & \textbf{Calibration AUROC} $\uparrow$ &  \textbf{\thead{\Huge OOD AUROC \\ \Huge ImageNet-O}} $\uparrow$ & \textbf{Accuracy} $\uparrow$ &\textbf{\thead{ \Huge\# Trainable \\ \Huge Parameters }}      \\ 
 \hline \hline
\rowcolor{amaranth}
Deep Ensemble of 5 \citep{lakshminarayanan2017simple} &  \cellcolor{amaranth} Gibbs softmax entropy & \cellcolor{amaranth} $0.861 \pm 0.0004$ & $0.642 \pm 0.001$  & \cellcolor{amaranth} $\boldsymbol{78.4 \pm 0.06\%}$ & \cellcolor{amaranth}  $117,672,960$\\
\hline \hline
\rowcolor{LightCyan}
\rowcolor{LightCyan}
\textbf{DAB with fine-tuned ResNet-50 (ours)} & Statistical distance (KL) & $\boldsymbol{0.868 \pm 0.0008} $ & $\boldsymbol{0.743 \pm 0.004}$   & $76.1  \pm 0.02\%$ & $36,612,328$   \\ 
\rowcolor{LightCyan}
DAB with pre-trained ResNet-50 (ours) & Statistical distance (KL) & $0.866 \pm 0.0003 $ & $0.732 \pm 0.004$   & $74.71  \pm 0.09\%$ & $\boldsymbol{13,077,736}$   \\
\bottomrule
\end{tabular}}
\end{adjustbox}
\label{table:imagenet}
\end{table*}
\subsection{DAB for Misclassification Prediction}\label{sec:calibration_experiments}
\rebuttal{To further assess the quality of the proposed distance score (\Eqref{eq:distance_3}), we evaluate DAB's performance on misclassification prediction\rebuttal{~\citep{corbiere2019addressing, zhu2022rethinking}}. Misclassification prediction is formulated as a binary classification task with the positive label indicating a classifier's mistake.} We report the \textit{Calibration AUROC} that was introduced by~\citet{50669} and later used by~\citet{pmlr-v162-postels22a}. As pointed out by~\citet{pmlr-v162-postels22a}, the ECE (Expected Calibration Error) is not the appropriate metric for DUMs since their uncertainty scores are not directly reflected to the probabilistic forecast. 
Another benefit of Calibration AUROC compared to ECE is that it cannot be trivially reduced using 
post hoc calibration heuristics such as
temperature scaling~\citep{guo2017calibration}. In contrast, Calibration AUROC focuses on the intrinsic ability of the model to distinguish its correct from incorrect predictions and the ranking performance of its uncertainty score, i.e., whether high uncertainty predictions are wrong. 
In Table~\ref{table:calibration_score_}, we first evaluate DUMs' performance in predicting misclassified CIFAR-10 images. Here, we note that DAB bridges the gap between baselines and costly ensembles.

\rebuttal{To illustrate scalability, we focus on the large-scale ImageNet dataset~\citep{russakovsky2015imagenet} for the rest of this section. We observe that previous DUMs either exhibit training instability issues when scaled to larger datasets or fall behind in calibration~\citep{pmlr-v162-postels22a}. For this experiment, we use the ResNet-50 architecture. For DAB, we instantiate the backbone network with the publicly available, pre-trained weights (excluding the last dense layer of the classifier). The ResNet-50 features are passed through three fully connected dense layers that produce DAB's input. We consider two cases. First, we further fine-tune ResNet-50 alongside DAB. Next, we consider a setup similar to that of~\citet{alemi2016deep} where gradients are not backpropagated to the backbone network. This substantially decreases the training time and the number of trainable parameters. In both cases, we train DAB for 70 epochs. DAB uses a codebook with $1000$ entries and 80-dimensional latent features. The implementation details are deferred to Appendix~\ref{sec:imagenet_setup}}. 
%

We leverage DAB's distance awareness and consider a variant of the learning algorithm presented in Section~\ref{sec:learning_algorithm}. 
In particular, we modify the training objective in~\Eqref{eq:ua_ib} to encourage high uncertainty for the misclassified datapoints in $\mathcal{D}_\train$. This is achieved by adding a max-margin loss term (\Eqref{eq:u_lb}) in the objective at the gradient updates (Algorithm~\ref{alg:training_ig_vib}) to push the misclassified datapoints in $\mathcal{D}_\train$ away from the codebook. The codebook is trained to represent only the correctly classified training examples.\footnote{This variant has no effect for the CIFAR-10 experiments. This is because all models achieve very high training accuracy very early, eventually reaching $100\%$, leaving no misclassified examples for the repulsive loss term.}We notice that: 

\textit{penalizing high or small uncertainty (distance from the codebook) for the training examples according to the classification outcome improves model's calibration on the test examples.}

\rebuttal{For completeness, we also examine DAB's performance on the ImageNet vs ImageNet-O~\citep{hendrycks2021nae} OOD task. For the OOD experiments, we quantize all training datapoints regardless the classification outcome. We report only AUROC which is preferred in situations of highly imbalanced OOD tasks~\citep{pinto2022impartial} -- ImageNet-O has only $2,000$ OOD images. 
} 

In Table~\ref{table:imagenet}, we report performance against ensembles which is the gold standard in calibration and OOD detection. As we see, DAB has better calibration and OOD detection than ensembles in both cases. We remark that applying DAB without ResNet-50  fine-tuning does not substantially hurt its calibration or OOD capability. The small performance gap is attributed to the fact that the largest part of the encoder is not regularized to stay close to the codebook. Finally, we see that DAB nearly reaches the initial accuracy of $74.9$ achieved by the pre-trained ResNet-50 \footnote{\url{https://keras.io/api/applications/}} like the standard VIB~\citep{alemi2016deep}.

\textbf{Additional Experiments.} Due to space constraints, we supplement the experiments in Appendix. We ablate DAB's hyperparameters in Appendix~\ref{sec:ablation}. Appendix~\ref{sec:corruption} evaluates DAB on corrupted CIFAR-10. Appendix~\ref{sec:visualize_clusters} provides further qualitative evaluations of the learned codebook. In Appendix~\ref{sec:regression_ood}, we test DAB on OOD regression problems.
\vspace{-0.01pt}
\section{Limitations \& Future Research}
The main purpose of this work is to define and analyze a more comprehensive notion of distance from the training data manifold under the auspices of information bottleneck methods. Although in the experiments we used the Kullback-Leibler divergence, the proposed framework is flexible and supports inference with alternative statistical distances~\citep{minka2005divergence,nielsen2023fisher}. Evaluating the impact of diverse distance metrics on model's performance is a compelling avenue for future work.

DAB, like other DUMs, currently falls behind ensembles in terms of accuracy. As we briefly discussed in Section~\ref{sec:ua_ib}, it remains to be seen whether this can be fixed by redesigning DAB's decoder to make use of its distance score. In this article, DAB was demonstrated primarily on image classification tasks. Applying DAB in different settings such as natural language generation~\citep{xiao-etal-2022-uncertainty} is another important application area. In this work, we did not use additional OOD datasets during training. DAB's Outlier Exposure (OE)~\citep{hendrycks2019oe} by repelling OE datapoints away from the codebook could further improve OOD capability. Moreover, leveraging the majority vote among data points within each centroid (\Figref{fig:10_clusters}) can enhance the model's ability to make accurate predictions, even when faced with labels containing errors~\citep{platanios2020learning}. Finally, analyzing DAB in concert with data augmentation methods for enhancing the codebook for image datasets is another interesting line of future research. We intend this paper to offer a fresh perspective on uncertainty estimation and we believe its empirical findings are an important step toward future directions mentioned above. 
\section{Conclusion}
We introduced DAB, \rebuttal{a distance-aware framework} for deep neural networks (DNNs). We framed \rebuttal{distance awareness} as a rate-distortion problem to learn a lossy compression of the training dataset via a codebook of encoders. 
Experimental analysis shows that DNNs equipped with distances from this codebook outperform expensive baselines at OOD tasks and are better calibrated.

\section*{Impact Statement}

This paper presents work whose goal is to advance the field of Machine Learning. There are many potential societal consequences of our work, none which we feel must be specifically highlighted here.

\section*{Acknowledgements}
This work was supported in part by DARPA award FA8750-17-2-0130, NSF grant 2038612, and by the U.S. Army Research Office and the U.S. Army Futures Command under contract W519TC-23-F-0045. The authors would like to thank Christos Faloutsos for insightful discussions on clustering methods that helped inform this work and Barnabás Póczos for his astute feedback on the manuscript draft.

\section*{Code Availability}
Publicly available code for reproducing the experiments can be found at:

\url{https://github.com/ifiaposto/Distance_Aware_Bottleneck}

\bibliography{example_paper}{plain}
\bibliographystyle{icml2024}

\clearpage
\newpage
\appendix
\onecolumn

\appendix
\section{Preliminaries}\label{appendix:preliminaries}
\subsection{Definitions}
\begin{definition}[\textit{\textbf{Bregman Divergence}}]\label{def:bregman_divergence}
Let $f: \mathcal{S}\rightarrow \mathbb{R}$ be a differentiable, strictly convex function of Legendre type on a convex set $\mathcal{S}\subseteq\mathbb{R}^d$. The \textit{Bregman divergence} $D_{f}:\mathcal{S} \times \mathcal{S} \rightarrow [0, \infty)$  for any two points $\boldsymbol{x},\boldsymbol{y} \in \mathcal{S}$ is defined as~\citep{bregman1967relaxation}:
\begin{align}
D_{f}(\boldsymbol{x},\boldsymbol{y})=f(\boldsymbol{x})-f(\boldsymbol{y}) - \langle \boldsymbol{x}-\boldsymbol{y},\nabla f (\boldsymbol{y})\rangle,
\end{align}
where $\nabla f(\boldsymbol{y}) \in \mathbb{R}^d$ denotes the gradient vector of $f$ evaluated at $\boldsymbol{y}$.
\end{definition}
\begin{definition}[\textit{\textbf{Dual Bregman Form of Exponential Family}}]\label{def:bregman_form_exponential}
Each probability density function for $\boldsymbol{x}\in  \mathcal{X} \subseteq \mathbb{R}^d$ in the exponential family $\mathcal{F}_{\psi}=\{p_{\psi}(\cdot ; {\boldsymbol{\phi}})\mid \boldsymbol{\phi}\in \Phi\}$, where $\Phi=\mathrm{dom}(\psi) \subseteq \mathbb{R}^p$,  
has the form:
\begin{align}
p_{\psi}(\boldsymbol{x} ;\boldsymbol{\phi} )=\exp(\langle \boldsymbol{t}(\boldsymbol{x}),\boldsymbol{\phi}\rangle -\psi(\boldsymbol{\phi}))\, h_0(\boldsymbol{x}).
\label{eq:exp_form}
\end{align}
$\boldsymbol{t}(\boldsymbol{x})$ is the natural statistic of the family. $\boldsymbol{\phi}$ is called the natural parameter and $\Phi$ the natural parameter space. $\psi(\boldsymbol{\phi})$ is the log-partition function of the family that normalizes the density function. $h_0(\boldsymbol{x})$ is a non-negative function that does not depend on $\boldsymbol{\phi}$. 

If $\boldsymbol{t}(\boldsymbol{x})$ is minimal, i.e., $\nexists$ non-zero $\boldsymbol{\alpha}\in \mathbb{R}^p$ such that $\langle \boldsymbol{\alpha}, \boldsymbol{t}(\boldsymbol{x})\rangle=c$ (a constant) $\forall \boldsymbol{x}\in \mathcal{X}$, and $\Phi$ is open, i.e., $\Phi=\mathrm{int}(\Phi)$, then $\mathcal{F}_{\psi}$ is called \textit{regular exponential family}. In this case, it can be shown~\citep{barndorff2014information} that $\Phi$ is a non-empty convex set in $\mathbb{R}^d$ and that $\psi$ is a convex function. From Theorem 4 by~\citet{banerjee2005clustering}, the density of~\Eqref{eq:exp_form} can be written as:
\begin{align}
p_{\psi}(\boldsymbol{x} ; \boldsymbol{\phi})  &= \exp\left(-D_{\psi^*}\big(\boldsymbol{t}(\boldsymbol{x}),\that{\boldsymbol{t}}(\boldsymbol{\phi})\big)\right)f_{\psi^*}(\boldsymbol{x}), \label{eq:exp_bregman_forma}
\end{align}
where $\psi^*$ is the Legendre-conjugate of $\psi$ and $D_{\psi^*}$ the corresponding Bregman divergence (def.~\ref{def:bregman_divergence}). $\that {\boldsymbol{t}}(\boldsymbol{\phi})$ is the expectation of the sufficient statistic:
\begin{align}
&\that {\boldsymbol{t}} (\boldsymbol{\phi}) \triangleq\mathbb{E}_{X}  [\boldsymbol{t}(\boldsymbol{x})].\label{eq:mean_t}
\end{align}
By differentiating $\int p_{\psi}(\boldsymbol{x};\boldsymbol{\phi}) d\boldsymbol{x}=1$ with respect to $\boldsymbol{\phi}$ and by making use of~\Eqref{eq:exp_form} and~\Eqref{eq:mean_t}, it can be proved that:
\begin{align}
\that{\boldsymbol{t}}(\boldsymbol{\phi})  = \nabla \psi (\boldsymbol{\phi}).
\label{eq:mean_t_b}
\end{align}
Finally, $f_{\psi^*}(\boldsymbol{x})$ is a non-negative function that does not depend on $\boldsymbol{\phi}$:
\begin{align}
f_{\psi^*}(\boldsymbol{x})  =\exp\left(\psi^*(\boldsymbol{t}(\boldsymbol{x}))\right)h_0(\boldsymbol{x}) \label{eq:exp_bregman_formb}.
\end{align}
Therefore, when we train by Maximum Likelihood Estimation (MLE) to learn $\boldsymbol{\phi}$, this term can be omitted from the objective function. \Eqref{eq:exp_bregman_forma} is called the \textit{Bregman form} of the exponential family (\Eqref{eq:exp_form}) and provides a convenient way to parametrize the exponential family
distribution with its expectation parameter (\Eqref{eq:mean_t}).
\end{definition}

\begin{definition}[\textit{\textbf{Scaled Exponential Family}}]\label{def:scaled_exp_family}
Given an exponential family $\mathcal{F}_{\psi}$ with natural parameter $\boldsymbol{\phi}$ and log-partition function $\psi(\boldsymbol{\phi})$ (\Eqref{eq:exp_form}), a \textit{scaled exponential family}~\citep{jiang2012small} $\mathcal{F}^\alpha_{\psi}$ with $\alpha>0$ has natural parameter $\tilde{\boldsymbol{\phi}}=\alpha \boldsymbol{\phi}$ and log-partition function $\tilde{\psi}(\tilde{\boldsymbol{\phi}})=\alpha \psi(\tilde{\boldsymbol{\phi}}/\alpha)=\alpha \psi(\boldsymbol{\phi})$. In case $\mathcal{F}_{\psi}$ is a regular exponential family, the Bregman form of the scaled family is~\citep{jiang2012small}:
\begin{align}
p_{\tilde{\psi}}(\boldsymbol{x};\tilde{\boldsymbol{\phi}}) = \exp\left(-\alpha D_{\psi^*}\big(\boldsymbol{t}(\boldsymbol{x}),\that {\boldsymbol{t}}(\boldsymbol{\phi})\big)\right)f_{\alpha \psi^*}(\boldsymbol{x}),
\end{align}
\end{definition}
where $\psi^*$ is the Legendre-conjugate of $\psi$. $f_{\alpha \psi^*}$ is defined in~\Eqref{eq:exp_bregman_formb} where we scale $\psi^*$ by $\alpha$. Finally, the mean $\that {\boldsymbol{t}}(\boldsymbol{\phi})$ of $\mathcal{F}^\alpha_{\psi}$ is the same with that of $\mathcal{F}_{\psi}$ and is given in~\Eqref{eq:mean_t},~\Eqref{eq:mean_t_b}.

\subsection{Variational Information Bottleneck}
\citet{alemi2016deep} derive efficient variational estimates of the mutual information terms in~\Eqref{eq:ib_ii}. The accuracy term is:
\begin{align}
I(Z,Y;\boldsymbol{\theta}) &= \int \log \frac{p(\boldsymbol{y}\mid \boldsymbol{z} ; \boldsymbol{\theta})}{p(\boldsymbol{y}) }p(\boldsymbol{y},\boldsymbol{z};\boldsymbol{\theta})d\boldsymbol{z}d\boldsymbol{y}. \footnotemark
\label{eq:accuracy_i}
\end{align} 
The decoder $p(\boldsymbol{y}\mid \boldsymbol{z} ;\boldsymbol{\theta})$ in~\Eqref{eq:accuracy_i} is fully defined:
\begin{align}
p(\boldsymbol{y}\mid \boldsymbol{z} ;\boldsymbol{\theta})= \int \frac{p(\boldsymbol{y}|\boldsymbol{x})p(\boldsymbol{z}|\boldsymbol{x};\boldsymbol{\theta})p(\boldsymbol{x})}{p(\boldsymbol{z};\boldsymbol{\theta})} d\boldsymbol{x}.
\label{eq:exact_decoder}
\end{align}
\footnotetext{Note that in our problem, $p(\boldsymbol{y}) ,p(\boldsymbol{x}), p(\boldsymbol{y}\mid \boldsymbol{x})$ refer to our training dataset. Therefore, they are independent of $\boldsymbol{\theta}$.}
Generally,~\Eqref{eq:exact_decoder} cannot be computed in closed-form. Moreover, it contains the intractable marginal $p(\boldsymbol{z};\boldsymbol{\theta})$:
\begin{align}
p(\boldsymbol{z};\boldsymbol{\theta})=\int p(\boldsymbol{z}\mid\boldsymbol{x};\boldsymbol{\theta})p(\boldsymbol{x})d\boldsymbol{x}.
\label{eq:exact_marginal}
\end{align}
Similarly, the regularization term is analytically intractable since:
\begin{align}
I(Z,X;\boldsymbol{\theta}) &= \int \log \frac{p(\boldsymbol{z}\mid \boldsymbol{x};\boldsymbol{\theta})}{p(\boldsymbol{z};\boldsymbol{\theta})}p(\boldsymbol{x},\boldsymbol{z};\boldsymbol{\theta})d\boldsymbol{z}d\boldsymbol{x}
\label{eq:regularization_i}.
\end{align}

Variational estimates in a distributional family $m(\boldsymbol{y}\mid \boldsymbol{z};\boldsymbol{\theta})$ \footnote{As in the main paper, we use $\boldsymbol{\theta}$ to denote the joint set of parameters of both encoder and variational decoder.} and $q(\boldsymbol{z};\boldsymbol{\phi})$ of~\Eqref{eq:exact_decoder},~\Eqref{eq:exact_marginal} minimize the Kullback-Leibler divergences $D_\KL(p(\boldsymbol{y}\mid \boldsymbol{z} ;\boldsymbol{\theta}), m(\boldsymbol{y}\mid \boldsymbol{z};\boldsymbol{\theta}))$ and $D_\KL(p( \boldsymbol{z} ;\boldsymbol{\theta}), q(\boldsymbol{z};\boldsymbol{\phi}))$, respectively. Non-negativity of the Kullback-Leibler divergence yields a lower bound of~\Eqref{eq:accuracy_i} and an upper bound of~\Eqref{eq:regularization_i}. Substituting these variational bounds in~\Eqref{eq:ib_ii} gives us the Variational Information Bottleneck (VIB) minimization loss:
\begin{align}
\mathcal{L}_{\mathrm{VIB}} = \mathbb{E}_{X,Y,Z}[-\log m(\boldsymbol{y}\mid \boldsymbol{z};\boldsymbol{\theta})]+\beta \mathbb{E}_{X}[D_\KL(p(\boldsymbol{z}\mid \boldsymbol{x};\boldsymbol{\theta}),q(\boldsymbol{z};\boldsymbol{\phi}))].
\label{eq:vib_loss}
\end{align}

\section{Learning Algorithm (Section~\ref{sec:learning_algorithm} continued.)}\label{sec:training_vib}
\RestyleAlgo{ruled}
\SetKwInput{KwData}{Inputs}
\SetKwInput{KwResult}{Outputs}
\SetKwInput{KwInit}{Initialize}
\SetKwProg{Initialize}{init}{}{}

\SetKwComment{Comment}{/* }{ */}
\begin{algorithm}[ht]
\caption{Optimization of Distance Aware Bottleneck}\label{alg:training_ig_vib}
\KwData \\
\ \ training data: $\mathcal{D}_{\train}=\{(\mathbf{x}_i,\mathbf{y}_i)\}_{i=1}^{N}$ \\
\ \ codebook size: $k$ \\ 
\ \ statistical distance: $D$ \\ 
  \ \ hyper-parameters: \\
  \ \ \ \ regularization coefficient $\beta \ge 0$ (\Eqref{eq:ua_ib})\\ 
  \ \ \ \ temperature $\alpha \ge 0 $ (\Eqref{eq:vib_ii_e_step_sol}) 
 \\ 
\KwResult \\
\ \ optimal parameters of encoder and decoder: $\boldsymbol{\theta}^*$ \\
\ \ optimal codebook parameters: $\boldsymbol{\phi}^*$ \\ 
\ \ marginal assignment probabilities: $\pi*$ \\ 
\KwInit \\ 
\ \ encoder $p_{}(\boldsymbol{z} \mid \boldsymbol{x};\boldsymbol{\theta})$ \\
\ \ decoder  \ $m(\boldsymbol{y} \mid \boldsymbol{z};\boldsymbol{\theta})$ \\ 
\ \ codebook $\{q_{\kappa}(\boldsymbol{z};\boldsymbol{\phi})\}_{\kappa=1}^{k}$ \\
\ \ $\pi$, $\pi_{{\boldsymbol{x}_i}}$  to uniform 
distribution \\
\While{not converged} {
\textbf{step 1:} \\
Update decoder $m(\boldsymbol{y} \mid \boldsymbol{z};\boldsymbol{\theta})$, encoder $p_{}(\boldsymbol{z} \mid \boldsymbol{x};\boldsymbol{\theta})$: $\boldsymbol{\theta} \leftarrow \boldsymbol{\theta} - \eta_{\boldsymbol{\theta}} \nabla_{\boldsymbol{\theta}} \mathcal{L}_{\mathrm{DAB}}$ \  ($\mathcal{L}_{\mathrm{DAB}}$ in~\Eqref{eq:ua_ib})\\ 
\textbf{step 2:} \\
\For{$i=1,2,\dots,N$}{
\For{$\kappa=1,2,\dots,k$}{
$\pi_{\boldsymbol{x}_i}(\kappa) = \frac{\pi(\kappa)}{\mathcal{Z}_{\boldsymbol{x}_i}(\alpha)}\exp\left(-\alpha D(p(\boldsymbol{z} \mid \boldsymbol{x};\boldsymbol{\theta}),q_\kappa(\boldsymbol{z};\boldsymbol{\phi}))\right)$ \ 
(\Eqref{eq:vib_ii_e_step_sol})
}
}
\textbf{step 3:} \\
Update codes $q_{\kappa}(\boldsymbol{z}; \boldsymbol{\phi})$: $\boldsymbol{\phi} \leftarrow \boldsymbol{\phi} - \eta_{\boldsymbol{\phi}} \nabla_{\boldsymbol{\phi}} \mathcal{L}_{\mathrm{DAB}}$ \ ($\mathcal{L}_{\mathrm{DAB}}$ in~\Eqref{eq:ua_ib})\\
\textbf{step 4:} \\
    \For{$\kappa=1,2,\dots,k$}{
     $\pi(\kappa)=\frac{1}{N}\sum_{i=1}^{N} \pi_{\boldsymbol{x}_i}(\kappa)$ \ (\Eqref{eq:prior_cluster_prob_update})
    }
}
\end{algorithm}
DAB's concrete learning algorithm is given in Algorithm~\ref{alg:training_ig_vib}. Each epoch (outer loop in Algorithm~\ref{alg:training_ig_vib}) consists of the four alternating minimization steps presented in Section~\ref{sec:learning_algorithm}.

To render the update of $\pi$ (\Eqref{eq:prior_cluster_prob_update}) amenable to mini-batch optimization, we maintain i) a non-trainable tensor that holds the current $\pi$ ii) a moving average of the mini-batch marginals (\Eqref{eq:prior_cluster_prob_update}). 
The moving average is updated at step 4 in Algorithm~\ref{alg:training_ig_vib} such that at batch $t$ of size $B$:
\begin{align}
\pi^0({\kappa})=1/k, \ \pi^t({\kappa}) = \gamma \pi^{t-1}(\kappa)+ (1-\gamma)  \frac{1}{B}\sum_{i=1}^{B} \pi_{\boldsymbol{x}_i}(\kappa).\label{eq:momentum}
\end{align}
$0 \le \gamma \le 1$ is the momentum of the moving average. At the onset of step 4, the moving average is reset to the uniform distribution. At the end of step 4, $\pi$ is set to its moving average and is kept fixed throughout the rest of the steps, i.e., all training datapoints use the same $\pi$. 

We maintain two optimizers for the gradient updates $\nabla_{\boldsymbol{\phi}}\mathcal{L}_{\mathrm{DAB}}$ and $\nabla_{\boldsymbol{\theta}}\mathcal{L}_{\mathrm{DAB}}$. The gradient descent updates in Algorithm~\ref{alg:training_ig_vib} are written using constant learning rates $\eta_{\boldsymbol{\theta}}$, $\eta_{\boldsymbol{\phi}}$. In practice, we can use any optimizer with adaptive learning rates.  To make sure that the gradients are not propagated through $\pi_{\boldsymbol{x}}$ (\Eqref{eq:vib_ii_e_step_sol}), we apply a \texttt{tf.stop\_gradient} operator when $\mathcal{L}_{\mathrm{DAB}}$ is computed.  

In this work, we use multivariate Gaussian distributions for centroids and encoders. In this case, the centroids' parameters $\boldsymbol{\phi}$ correspond to the means and covariance matrices: $\boldsymbol{\phi} = \{\boldsymbol{\mu}_\kappa, \boldsymbol{\Sigma}_{\kappa}\}_{\kappa=1}^{k}$ and the optimal solution has a closed form~\citep{davis2006differential}. We empirically observed that using the closed-form update for the covariance matrix and gradient descent for the means facilitates optimization and speeds up convergence. To make use of the closed-form solution for the covariance matrix, we maintain non-trainable tensors holding current $\boldsymbol{\Sigma}_{\kappa}$ along with their moving averages. At the beginning of the training, the centroids's covariances are initialized to the identity matrix. The moving averages are updated in a way similar to that of $\pi$ (\Eqref{eq:momentum}). On the onset of step 3 in Algorithm~\ref{alg:training_ig_vib}, the moving average is reset to the zero matrix and is updated during the gradient updates  $\nabla_{\boldsymbol{\phi}}\mathcal{L}_{\mathrm{DAB}}$. At the end of step 3, the codebook covariances are set to their moving averages computed during this step. 

\section{VIB for Euclidean Clustering of Latent Codes}\label{sec:vib_for_clustering}
One way we can use the set of distributions $\{q_{\kappa}(\boldsymbol{z};\boldsymbol{\phi})\}_{\kappa=1}^{k}$ is to consider a mixture of $k$ distributions for the marginal $q(\boldsymbol{z};\boldsymbol{\phi})$ and trivially train it by gradient descent~\citep{alemi2018uncertainty}. To better understand the role of each $q_\kappa(\boldsymbol{z};\boldsymbol{\phi})$ during optimization, we associate a discrete random variable $\hat{Z}$ with $Z$. The value of $\hat{Z}$ indicates the assignment of $Z$ to a component $q_\kappa(\boldsymbol{z};\boldsymbol{\phi})$ of the mixture. We rewrite the upper bound of~\Eqref{eq:tau_z_centroid} in terms of $\hat{Z}$. The resulting decomposition of Proposition~\ref{proposition:vib_as_z_clustering} shows that the regularization term in the VIB (\Eqref{eq:vib_reg_decomposition}) encloses the objective of a fixed-cardinality rate-distortion function (\Eqref{eq:r_d_f_c_lag}) under some assumptions. However, computation of~\Eqref{eq:vib_reg_decomposition} requires Monte-Carlo samples of $Z$ to assign an encoder to the mixture components. The regularization terms of VIB and DAB are identical for $k=1$. The rate-distortion formulation of~\Eqref{eq:vib_reg_decomposition} motivates the DAB objective (Section~\ref{sec:ua_ib}). It also serves as a conceptual step towards the definition of a rate-distortion function acting directly on probability densities.
\begin{proposition}\label{proposition:vib_as_z_clustering} 
Let the variational marginal $q(\boldsymbol{z};\boldsymbol{\phi})$ of~\Eqref{eq:tau_z_centroid} be a mixture of $k$ distributions in $\mathbb{R}^d$ that belong to the scaled regular exponential family (def.~\ref{def:scaled_exp_family}) $\mathcal{F}^\alpha_{\psi}$ with $\alpha>0$ and log-partition function $\psi$. Let $\that{\boldsymbol{t}}_\kappa$ be the expected value of the minimal sufficient statistic $\bm{t}(Z)$ of the family when $Z \sim q_\kappa(\boldsymbol{z};\boldsymbol{\phi})$. Let $\hat{Z}$ be a (latent) categorical random variable  following distribution $q(\hat{\boldsymbol{z}})$. We assume $\hat{Z}$ is conditionally independent of $X$ given $Z$, i.e., $P(X,Y,Z)=P(X)P(Z\mid X)P(\hat{Z}\mid Z)$. The upper bound of the VIB in~\Eqref{eq:tau_z_centroid} can be decomposed as:
\begin{align}
& \mathbb{E}_{X}\big[{D}_{\mathrm{KL}}(p(\boldsymbol{z}\mid \boldsymbol{x};\boldsymbol{\theta}), q(\boldsymbol{z};\boldsymbol{\phi}))\big] = \nonumber \\
& -H(Z\mid X;\boldsymbol{\theta}) - \mathbb{E}_{X,Z}[\log f_{\psi^*_{}} (\boldsymbol{z})]+\alpha \mathbb{E}_{X, Z, \hat{Z}}[ D_{\psi^*_{}}(\boldsymbol{t}(\boldsymbol{z}),\that{\boldsymbol{t}}_{\hat{{z}}}(\boldsymbol{\phi}))] + \mathbb{E}_{X, Z}[{D}_{\mathrm{KL}}(q(\hat{\boldsymbol{z}} \mid \boldsymbol{z};\boldsymbol{\phi}),q(\hat{\boldsymbol{z}}))],
\label{eq:vib_reg_decomposition}
\end{align}
where $D_{\psi^*}$ is the Bregman divergence of  $\mathcal{F}_{\psi}$, i.e., the Bregman divergence defined by the Legendre-conjugate function $\psi^*$ of $\psi$. $f_{\psi^*}$ is a non-negative function that does not depend on the natural parameter $\boldsymbol{\phi}$.
\end{proposition}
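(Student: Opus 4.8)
The plan is to peel the left-hand side of~\Eqref{eq:vib_reg_decomposition} apart into three contributions by introducing the mixture's latent assignment variable $\hat{Z}$ and then substituting the dual Bregman form of the exponential family. First I would split the divergence into a ``self'' part and a ``cross'' part,
\[
\mathbb{E}_{X}\big[{D}_{\mathrm{KL}}(p(\boldsymbol{z}\mid \boldsymbol{x};\boldsymbol{\theta}), q(\boldsymbol{z};\boldsymbol{\phi}))\big] = \mathbb{E}_{X,Z}\big[\log p(\boldsymbol{z}\mid\boldsymbol{x};\boldsymbol{\theta})\big] - \mathbb{E}_{X,Z}\big[\log q(\boldsymbol{z};\boldsymbol{\phi})\big].
\]
The first term equals $-H(Z\mid X;\boldsymbol{\theta})$ by the definition of conditional differential entropy, which already supplies the first summand of~\Eqref{eq:vib_reg_decomposition}; all the remaining work is to re-express the cross term $-\mathbb{E}_{X,Z}[\log q(\boldsymbol{z};\boldsymbol{\phi})]$.

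Next I would unpack the mixture. Writing $q(\boldsymbol{z};\boldsymbol{\phi})=\sum_{\kappa}q(\hat{\boldsymbol{z}}{=}\kappa)\,q_{\kappa}(\boldsymbol{z};\boldsymbol{\phi})$ and letting $q(\hat{\boldsymbol{z}}\mid\boldsymbol{z};\boldsymbol{\phi})$ denote the responsibilities, Bayes' rule gives $q(\boldsymbol{z};\boldsymbol{\phi}) = q(\hat{\boldsymbol{z}})\,q_{\hat{\boldsymbol{z}}}(\boldsymbol{z};\boldsymbol{\phi})/q(\hat{\boldsymbol{z}}\mid\boldsymbol{z};\boldsymbol{\phi})$ for every $\hat{\boldsymbol{z}}$ and every $\boldsymbol{z}$ in the support of $q$. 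Since the left-hand side does not involve $\hat{\boldsymbol{z}}$, I can take $-\log$ of both sides and average over $\hat{Z}\sim q(\hat{\boldsymbol{z}}\mid\boldsymbol{z};\boldsymbol{\phi})$, obtaining
\[
-\log q(\boldsymbol{z};\boldsymbol{\phi}) = {D}_{\mathrm{KL}}\big(q(\hat{\boldsymbol{z}}\mid\boldsymbol{z};\boldsymbol{\phi}),q(\hat{\boldsymbol{z}})\big) - \mathbb{E}_{\hat{Z}\mid Z}\big[\log q_{\hat{\boldsymbol{z}}}(\boldsymbol{z};\boldsymbol{\phi})\big].
\]
Taking $\mathbb{E}_{X,Z}$ of this identity and using the assumed factorization $P(X,Y,Z)=P(X)P(Z\mid X)P(\hat{Z}\mid Z)$ --- which lets $\hat{Z}$ be appended to the joint law without disturbing $(X,Z)$ --- produces the last summand of~\Eqref{eq:vib_reg_decomposition} together with the residual $-\mathbb{E}_{X,Z,\hat{Z}}[\log q_{\hat{\boldsymbol{z}}}(\boldsymbol{z};\boldsymbol{\phi})]$.

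Finally I would expand the component log-densities via the exponential-family structure. Each $q_{\kappa}(\boldsymbol{z};\boldsymbol{\phi})$ belongs to the scaled regular exponential family $\mathcal{F}^{\alpha}_{\psi}$, so by def.~\ref{def:scaled_exp_family} together with~\Eqref{eq:exp_bregman_forma},
\[
\log q_{\kappa}(\boldsymbol{z};\boldsymbol{\phi}) = -\alpha\,D_{\psi^{*}}\big(\boldsymbol{t}(\boldsymbol{z}),\that{\boldsymbol{t}}_{\kappa}(\boldsymbol{\phi})\big) + \log f_{\psi^{*}}(\boldsymbol{z}),
\]
where $\that{\boldsymbol{t}}_{\kappa}(\boldsymbol{\phi})=\mathbb{E}_{Z\sim q_{\kappa}}[\boldsymbol{t}(Z)]$ is the mean parameter (equal to $\nabla\psi$ at the natural parameter by~\Eqref{eq:mean_t_b}) and $f_{\psi^{*}}$ is the $\boldsymbol{\phi}$-free carrier factor. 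Substituting this and noting that $f_{\psi^{*}}(\boldsymbol{z})$ does not depend on $\hat{\boldsymbol{z}}$ (so its $\hat{Z}$-expectation collapses) gives $-\mathbb{E}_{X,Z}[\log f_{\psi^{*}}(\boldsymbol{z})] + \alpha\,\mathbb{E}_{X,Z,\hat{Z}}[D_{\psi^{*}}(\boldsymbol{t}(\boldsymbol{z}),\that{\boldsymbol{t}}_{\hat{{z}}}(\boldsymbol{\phi}))]$, and collecting this with the conditional-entropy term and the responsibility-KL term reproduces~\Eqref{eq:vib_reg_decomposition}.

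The only genuinely new ingredient here is the auxiliary-variable rewriting of $-\log q(\boldsymbol{z};\boldsymbol{\phi})$ via Bayes' rule; the rest is the known Bregman identity of def.~\ref{def:bregman_form_exponential} plus bookkeeping. Accordingly, the step that most needs care will be tracking under which joint law each expectation is taken --- invoking the conditional-independence hypothesis precisely where $\hat{Z}$ enters, and keeping the $\alpha$-scaling of $\psi^{*}$ straight so that the distortion term carries the factor $\alpha$ while the $\boldsymbol{\phi}$-independent carrier term $f_{\psi^{*}}$ stays outside it; one should also note the mild support condition that $q(\boldsymbol{z};\boldsymbol{\phi})>0$ holds $p(\boldsymbol{z}\mid\boldsymbol{x};\boldsymbol{\theta})$-almost-surely, which is automatic for mixtures of exponential-family densities with overlapping supports.
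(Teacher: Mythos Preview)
Your proposal is correct and follows essentially the same route as the paper: both split the KL into $-H(Z\mid X;\boldsymbol{\theta})$ plus a cross term, rewrite $-\log q(\boldsymbol{z};\boldsymbol{\phi})$ by introducing the posterior $q(\hat{\boldsymbol{z}}\mid\boldsymbol{z};\boldsymbol{\phi})$ (the paper via an add--subtract of $\log q(\hat{\boldsymbol{z}}\mid\boldsymbol{z};\boldsymbol{\phi})$, you via the equivalent Bayes-rule identity), and then substitute the scaled Bregman form of def.~\ref{def:scaled_exp_family}. The arguments are interchangeable, and your remarks on tracking the joint law and the support condition are apt.
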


\begin{proof}
We expand the upper bound in~\Eqref{eq:tau_z_centroid}:
\begin{align}
&\mathbb{E}_{X}\big[{D}_{\mathrm{KL}}(p(\boldsymbol{z}\mid \boldsymbol{x};\boldsymbol{\theta}), q(\boldsymbol{z};\boldsymbol{\phi}))\big] = \nonumber \\ 
&\int p(\boldsymbol{x}) p(\boldsymbol{z}\mid\boldsymbol{x};\boldsymbol{\theta})\log p(\boldsymbol{z}\mid \boldsymbol{x};\boldsymbol{\theta})d\boldsymbol{z}d\boldsymbol{x}-\int p(\boldsymbol{x}) p(\boldsymbol{z}\mid\boldsymbol{x};\boldsymbol{\theta})\log q(\boldsymbol{z};\boldsymbol{\phi})d\boldsymbol{z}d\boldsymbol{x}.
\label{eq:vib_reg}
\end{align}
The first term of~\Eqref{eq:vib_reg} is the negative conditional differential entropy of the encoder, i.e., $-H(Z\mid X;\boldsymbol{\theta})$. We will focus on the second term of~\Eqref{eq:vib_reg}. For a fixed $\boldsymbol{z}$:
\begin{align}
\log q(\boldsymbol{z};\boldsymbol{\phi}) & = \mathbb{E}_{\hat{Z} \mid \boldsymbol{z} } [\log q(\boldsymbol{z};\boldsymbol{\phi})] \nonumber \\
&= \mathbb{E}_{\hat{Z} \mid \boldsymbol{z}   } [\log q(\boldsymbol{z};\boldsymbol{\phi}) + \log q(\hat{\boldsymbol{z}} \mid \boldsymbol{z};\boldsymbol{\phi})-\log q(\hat{\boldsymbol{z}} \mid \boldsymbol{z};\boldsymbol{\phi})] \nonumber \\
&= \mathbb{E}_{\hat{Z} \mid \boldsymbol{z}  } [\log q(\boldsymbol{z}, \hat{\boldsymbol{z}};\boldsymbol{\phi})-\log q(\hat{\boldsymbol{z}} \mid \boldsymbol{z};\boldsymbol{\phi})]
\nonumber \\
&= \mathbb{E}_{\hat{Z} \mid \boldsymbol{z} } [\log q(\boldsymbol{z} \mid \hat{\boldsymbol{z}};\boldsymbol{\phi}) + \log q(\hat{\boldsymbol{z}})-\log q(\hat{\boldsymbol{z}} \mid \boldsymbol{z};\boldsymbol{\phi})].
\label{eq:log_tau}
\end{align}
We first analyze the first term in ~\Eqref{eq:log_tau}. By definition of $\hat{Z}$, $q(\boldsymbol{z} \mid \hat{\boldsymbol{z}};\boldsymbol{\phi})=q_{\hat{\boldsymbol{z}}}(\boldsymbol{z};\boldsymbol{\phi})$. Let $\that{\boldsymbol{t}}_\kappa(\boldsymbol{\phi})$ be the expected value of $\boldsymbol{t}(Z)$ when $Z$ is sampled from the $\kappa$--th component of the mixture: $Z \sim q_\kappa(\boldsymbol{z};\boldsymbol{\phi})$. Since $q_\kappa(\boldsymbol{z};\boldsymbol{\phi})$ belongs to the regular exponential family, its Bregman form (\Eqref{eq:exp_bregman_forma}) is:
\begin{align}
q_\kappa(\boldsymbol{z};\boldsymbol{\phi}) = \exp\left(-D_{{\psi^*_{}}}(\boldsymbol{t}(\boldsymbol{z}),\that{\boldsymbol{t}}_\kappa(\boldsymbol{\phi}))\right)f_{\psi^*_{}}(\boldsymbol{z}),
\end{align}
where ${\psi^*_{}}$ is the conjugate of the log-partition function $\psi$ of the family, $D_{\psi^*_{}}$ is the Bregman divergence defined by ${\psi^*_{}}$, and $f_{\psi^*_{}}$ given in~\Eqref{eq:exp_bregman_formb}.  
In general, we can consider a scaled exponential family with Bregman form (see def.~\ref{def:scaled_exp_family}):
\begin{align}
q_\kappa(\boldsymbol{z};\boldsymbol{\phi}) = \exp\left(-\alpha D_{{\psi^*_{}}}(\boldsymbol{t}(\boldsymbol{z}),\boldsymbol{\hat{{t}}}_\kappa(\boldsymbol{\phi}))\right)f_{\alpha \psi^*_{}}(\boldsymbol{z}), \ \alpha>0.
\label{eq:scaled_exp_bregman}
\end{align}
We now look at the last two terms of ~\Eqref{eq:log_tau}:
\begin{align}
\mathbb{E}_{\hat{Z} \mid \boldsymbol{z} } [\log q(\hat{\boldsymbol{z}})-\log q(\hat{\boldsymbol{z}} \mid \boldsymbol{z};\boldsymbol{\phi})] = - {D}_{\mathrm{KL}}(q(\hat{\boldsymbol{z}} \mid \boldsymbol{z};\boldsymbol{\phi}),q(\hat{\boldsymbol{z}})).
\label{eq:kl_decomp}
\end{align}
By taking expectation of~\Eqref{eq:log_tau} with respect to $p(\boldsymbol{x}), p(\boldsymbol{z}\mid \boldsymbol{x};\boldsymbol{\theta})$ and using~\Eqref{eq:scaled_exp_bregman} and~\Eqref{eq:kl_decomp}, we can rewrite~\Eqref{eq:vib_reg}:
\begin{align}
&\mathbb{E}_{X}\big[{D}_{\mathrm{KL}}(p(\boldsymbol{z}\mid \boldsymbol{x};\boldsymbol{\theta}), q(\boldsymbol{z};\boldsymbol{\phi}))\big] = \nonumber \\
&-H(Z\mid X;\boldsymbol{\theta}) - \mathbb{E}_{X,Z}[\log f_{\psi^*_{}} (\boldsymbol{z})]+\alpha \mathbb{E}_{X, Z, \hat{Z}}\big[ D_{\psi^*_{}}(\boldsymbol{t}(\boldsymbol{z}),\that{\boldsymbol{t}}_{\hat{{z}}}(\boldsymbol{\phi}))\big] + \mathbb{E}_{X, Z}\big[{D}_{\mathrm{KL}}(q(\hat{\boldsymbol{z}} \mid \boldsymbol{z};\boldsymbol{\phi}),q(\hat{\boldsymbol{z}}))\big].
\label{eq:decomp}
\end{align}

\end{proof}

\ben{BE notes: Trying to relate this to Eq.~\ref{eq:ua_ib}.}
\ben{\begin{align}
    E_{P_X, {Q}}[D(p_x, {q_\kappa})]
    &= E_{P_X, {Q}}[KL(p_x \| {q_\kappa})] \\
    &= E_{p(x), \pi_x}[KL(p(z \mid x) \| {q}_\kappa(z))] \\
    &= E_{p(x), p(\kappa \mid x)}[KL(p(z \mid x) \| q_\kappa(z))] \\
    &= E_{p(x), p(\kappa \mid x)}[E_{p(z \mid x)}[\log (p(z \mid x) / q_\kappa(z))] \\
    &= E_{p(x), p(z \mid x)}[\log p(z \mid x)] - E_{p(x)p(z \mid x), p(\kappa \mid x)}[\log q_\kappa(z)].
\end{align}}
\ben{Explanation of the math above:
\begin{enumerate}
  \setcounter{enumi}{10}
    \item assuming $D(\cdot, \cdot)$ is forward KL.
    \item Sampling $p_x \sim P_X$ is equivalent to sampling $\{p(z \mid x) \mid x \sim p(x)\}$
\end{enumerate}
We can compare this to a standard VIB with a mixture model prior. Define $q(z) \triangleq \frac{1}{k}\sum_\kappa q_\kappa(z)$.
\begin{align}
    E_{p(x)}[KL(p(z \mid x) \| q(\tau)]
    &= E_{p(x)}[E_{p(z \mid x)}[\log p(z \mid x)  - q(z)]] \\
        &= E_{p(x), p(z \mid x)}[\log p(z \mid x)] - E_{p(x), p(z \mid x)}[\log q(z)].
\end{align}}
\ben{
This looks similar to the equation above, except that the second term regularizes the latent code $z$ to be similar to the \emph{$x$-specific cluster} $q_\kappa(z)$, rather than to the full mixture $q(z)$.}

\ben{However, we can further connect the second term in both equations using Jensen's inequality. We'll now assume that the mixture weights $p(\kappa \mid x)$ are learned:
\begin{align}
    \log q(z) &= \log E_{p(\kappa \mid x)}[q_\kappa(z)] \\
    &\ge  E_{p(\kappa \mid x)}[\log q_\kappa(z)].
\end{align}
Substituting this into the above, we get
\begin{align}
    E_{p(x)}[KL(p(z \mid x) \| q(z)]
    &= E_{p(x)}[E_{p(z \mid x)}[\log p(z \mid x)  - q(z)]] \\
    &\le E_{p(x), p(z \mid x)}[\log p(z \mid x)] - E_{p(x), p(z \mid x),p(\kappa \mid x)}[\log q_\kappa(z)].
\end{align}
The inequality above is the opposite direction because it is subtracted.
This last equation is the same as the UA IB. Thus, we can conclude that the IB with a mixture model prior (with learned mixture weights) is closely related to the UA IB, with one being a lower bound on the other.}

When minimizing~\Eqref{eq:decomp} with respect to $\boldsymbol{\theta}$, the Bregman term encourages encoder $p(\boldsymbol{z} \mid \boldsymbol{x};\boldsymbol{\theta})$ that generates samples $\boldsymbol{z}$ whose sufficient statistics are close to one of the means $\that{\boldsymbol{t}}_\kappa$ in terms of $D_\psi^*$. This term, in turn, encourages:

\circled{1} encoders that collapse to a single atom $\that{\boldsymbol{t}}_\kappa$: $q(\kappa\mid \boldsymbol{z};\boldsymbol{\phi})\rightsquigarrow1$. This is counterbalanced by the KL term of~\Eqref{eq:decomp}.

\circled{2} low-entropy encoders that generate almost deterministic sufficient statistics for its samples: $\boldsymbol{t}(\boldsymbol{z}) \rightsquigarrow \that{\boldsymbol{t}}_{\kappa}$. The negative entropy term in~\Eqref{eq:decomp} helps avoid such degenerate solutions. A similar observation for the special case of a single Gaussian $q(\boldsymbol{z};\boldsymbol{\phi})$ (note the KL term in~\Eqref{eq:decomp} vanishes and $\hat{Z}$ can be dropped in the second expectation in this case) following, however, an entirely algebraic route, is also made by~\citet{hoffman2017beta}. Here, we present an information-theoretic perspective of this trade-off.

Keeping everything but $\boldsymbol{\phi}$ fixed, minimizing~\Eqref{eq:vib_reg} over a finite number of sampled latent codes $\boldsymbol{z}$ is equivalent to MLE with a mixture distribution. In the case of a Gaussian mixture, this is equivalent to soft K-means clustering in the latent space in $\mathbb{R}^d$. For distributions for which $\boldsymbol{t}(\boldsymbol{z})=\boldsymbol{z}$\footnote{For example, for Gaussian $\mathcal{N}(\boldsymbol{\mu},\boldsymbol{\Sigma})$ with constant and a priori known covariance matrix $\boldsymbol{\Sigma}$, $\boldsymbol{t}(\boldsymbol{z})=\boldsymbol{z}$, $\boldsymbol{\phi}=\boldsymbol{\mu}=\that{\boldsymbol{t}}$.}, minimizing~\Eqref{eq:vib_reg_decomposition} with respect to $\boldsymbol{\phi}$ amounts to computing the Rate Distortion Finite Cardinality (RDFC) function (\Eqref{eq:r_d_f_c_lag}) with the Bregman distortion $D_{\psi^*}$~\citep{banerjee2004information}. The support $\mathcal{\hat{Z}}$ of $\hat{Z}$ to be learned has cardinality $k$ and corresponds to the sufficient statistic means $\mathcal{\hat{Z}}=\{\that{\boldsymbol{t}}_\kappa\}_{\kappa=1}^{k}$\footnote{We implicitly redefine $\hat{Z}$ to take values in $\mathcal{\hat{Z}}=\{\that{\boldsymbol{t}}_\kappa\}_{\kappa=1}^{k}$.}. In our case, the log-likelihood of latent codes sampled by the encoder is maximized instead. Moreover, the source (encoder) is not apriori known but its parameters $\boldsymbol{\theta}$ are trainable during optimization. Using the decomposition of~\Eqref{eq:decomp}, the first two terms can be ignored since $H(Z\mid X;\boldsymbol{\theta})$ and $\log f_{\psi^*}(\boldsymbol{z})$ do not depend on $\boldsymbol{\phi}$.

\section{Additional Experiments}
\subsection{Ablation Studies on CIFAR-10 \label{sec:ablation}}

In Table~\ref{table:OoD_scores}, we compare the OOD performance of DAB models when using other commonly-used OOD metrics. As expected, the proposed distortion score, that is \textit{explicitly} minimized for the training datapoints via the loss function in~\Eqref{eq:ua_ib}, yields better OOD detection performance.  
\begin{table}[!ht]
\centering
\caption{\textbf{DAB performance with alternative OOD scores. }$D_\KL$ refers to the Kullback-Leibler distortion of~\Eqref{eq:distance_3}. $H$ refers to the entropy of the decoder's classifier: $H \triangleq \mathbb{E}_{Y,Z\mid \boldsymbol{x}}[- \log m(\boldsymbol{y}\mid \boldsymbol{z};\boldsymbol{\theta})]$. Finally, $p_{max}$ refers to the maximum probability of the classifier: $p_{\max}\triangleq \argmax_{c}\mathbb{E}_{Z\mid \boldsymbol{x}}[ m(Y=c \mid \boldsymbol{z};\boldsymbol{\theta})]$. $p_{\max}$ and $H$ are approximated by Monte Carlo with a single sample of $Z$. The Kullback-Leibler divergence from the learned centroids is more sensitive to input variations rendering the distortion of~\Eqref{eq:distance_3} a better indicator of an OOD input. Moreover, it is Monte Carlo sample-free for Gaussian encoders and centroids.}\label{table:OoD_scores}
\begin{tabular}{cc|c|c|c|}
        \hline
\multicolumn{1}{|c|}{\multirow{2}{*}{\textbf{OOD score}}}    & \multicolumn{2}{c|}{\textbf{SVHN}}    & \multicolumn{2}{c|}{\textbf{CIFAR-100}}                     \\ \cline{2-5}
\multicolumn{1}{|c|}{}          & \textbf{AUROC} $\uparrow$  & \textbf{AUPRC}   $\uparrow$      & \textbf{AUROC} $\uparrow$ & \textbf{AUPRC}         $\uparrow$                                                                \\ \hline

\multicolumn{1}{|c|}{\begin{tabular}[c]{@{}c@{}} $D_{\KL}$ \end{tabular}} & $\mathbf{0.986 \pm 0.004}$          & $\bm{0.994 \pm 0.002}$               & $\mathbf{0.922 \pm 0.002}$             & \bm{${0.915 \pm 0.002}$}                                                                                                          \\ \hline

\multicolumn{1}{|c|}{\begin{tabular}[c]{@{}c@{}} $H$ \end{tabular}} & $0.964 \pm 0.009$          & $0.982 \pm 0.005$               & $0.891  \pm 0.003$ & $0.883 \pm 0.003$                                                                                                          \\ \hline
\multicolumn{1}{|c|}{\begin{tabular}[c]{@{}c@{}} $1-p_{max}$  \end{tabular}} & $  0.959 \pm 0.009   $        & $0.978 \pm 0.006$               & $0.889 \pm 0.003$           &  $0.875 \pm 0.003$    \\        \hline
\end{tabular}
\end{table}

In the rest of this section, we study the effect of the DAB hyperparameters, also listed in Table~\ref{table:regression_hyperparameters}, on the OOD performance of our model.

In Table~\ref{tab:ood_vary_k}, we do an ablation study on the RDFC cardinality $k$. We see that a larger number of centroids improves the quality of the uncertainty estimates. However, further increasing the codebook size with $k>10$ yields diminishing performance benefits. Similar to~\Figref{fig:10_clusters}, we sought to justify this model's behavior via visual inspection of the codebook. We noticed that when $k>10$ some centroids are assigned to only a small number of training datapoints. This observation can serve as a recipe for choosing the codebook size: albeit a larger codebook will not harm performance, unutilized entries indicate that a smaller codebook can achieve similar quality for the model's uncertainty estimates.
\begin{table}[!ht]
\centering
\caption{\textbf{Ablation study over codebook size $\boldsymbol{k}$.} A single Gaussian code $q(\boldsymbol{z})$ does not discriminate well CIFAR-10 from the visually similar datapoints of CIFAR-100. As we increase the number of centroids, DAB progressively becomes better at distinguishing these datasets. DAB reaches competitive performance with a small number of 10 centroids. The performance remains roughly the same when using a larger cardinality $k>10$.}
\label{tab:ood_vary_k}
\begin{tabular}{cc|c|c|c|}
        \hline
\multicolumn{1}{|c|}{\multirow{2}{*}{\textbf{}}}    & \multicolumn{2}{c|}{\textbf{SVHN}}    & \multicolumn{2}{c|}{\textbf{CIFAR-100}}                     \\ \cline{2-5}
\multicolumn{1}{|c|}{}          & \textbf{AUROC} $\uparrow$  & \textbf{AUPRC}   $\uparrow$      & \textbf{AUROC} $\uparrow$ & \textbf{AUPRC}         $\uparrow$                                                                \\ \hline

\multicolumn{1}{|c|}{\begin{tabular}[c]{@{}c@{}} $k=10$\end{tabular}} & $\mathbf{0.986 \pm 0.004}$          & $\mathbf{0.994 \pm 0.002}$               & $\mathbf{0.922 \pm 0.002}$             & $\mathbf{0.915 \pm 0.002}$                                                                                                          \\ \hline
\multicolumn{1}{|c|}{\begin{tabular}[c]{@{}c@{}} $k=5$\end{tabular}} & $0.968 \pm 0.031$               & $0.986 \pm 0.012$             & $0.912 \pm 0.009$ &  $0.907 \pm 0.007$                                                                                                         \\ \hline
\multicolumn{1}{|c|}{\begin{tabular}[c]{@{}c@{}} $k=1$ \\ vanilla VIB~\citep{alemi2016deep}\end{tabular}} & $0.906 \pm 0.052$          & $0.958 \pm 0.026$    & $0.746 \pm 0.023$           & $0.764 \pm 0.026$           \\ \hline
\end{tabular}
\end{table}

In Table~\ref{tab:ood_vary_alpha}, we study the effect of the temperature $\alpha$ (\Eqref{eq:vib_ii_e_step_sol}). We verify that $\alpha$ controls the strength of the statistical distance when comparing a datapoint with the codebook. For small values of $\alpha$, the model exhibits a uniformity-tolerance for the datapoints that lie well beyond the support of the training dataset. On the other hand, the distribution $\pi_{\boldsymbol{x}}$ (\Eqref{eq:vib_ii_e_step_sol}) becomes sharper for larger values of $\alpha$. A sharper distribution translates to a more informative centroid assignment for datapoint $\boldsymbol{x}$. Subsequently, an informative codebook helps the model to successfully mark the areas of the input distribution that is familiar with.

\begin{table}[!ht]
\centering
\caption{\textbf{Ablation study over temperature $\boldsymbol{\alpha}$. } With small values of $\alpha$, the model fails to discriminate inputs successfully, which it should be less confident about. Large values of $\alpha$ lead to a more concentrated assignment of the training datapoints to the centroids. This, in turn, provides the model with more effective OOD scores that sufficiently penalize large distances from the codebook.}
\label{tab:ood_vary_alpha}
\begin{tabular}{cc|c|c|c|}
        \hline
\multicolumn{1}{|c|}{\multirow{2}{*}{\textbf{}}}    & \multicolumn{2}{c|}{\textbf{SVHN}}    & \multicolumn{2}{c|}{\textbf{CIFAR-100}}                     \\ \cline{2-5}
\multicolumn{1}{|c|}{}          & \textbf{AUROC} $\uparrow$  & \textbf{AUPRC}   $\uparrow$      & \textbf{AUROC} $\uparrow$ & \textbf{AUPRC}         $\uparrow$                                                                \\ \hline

\multicolumn{1}{|c|}{\begin{tabular}[c]{@{}c@{}} $\alpha=\phantom{0}0.1$\end{tabular}} & $ 0.932\pm 0.038$ & $ 0.972\pm 0.018$ & $ 0.756\pm 0.031$         & $0.776\pm 0.032$                                                                                                          \\ \hline
\multicolumn{1}{|c|}{\begin{tabular}[c]{@{}c@{}} $\alpha=\phantom{0}0.5$\end{tabular}} & $ 0.958\pm 0.045$               & $ 0.982 \pm 0.019$             & $ 0.878\pm 0.057$ &  $ 0.879\pm 0.043$                                                                                                         \\ \hline
\multicolumn{1}{|c|}{\begin{tabular}[c]{@{}c@{}} $\alpha=\phantom{0}1.0$\end{tabular}} & $\boldsymbol{0.986 \pm 0.004}$          & $\boldsymbol{0.994 \pm 0.002}$               & $\boldsymbol{0.922 \pm 0.002}$             & $\boldsymbol{0.915 \pm 0.002}$            \\ \hline
\multicolumn{1}{|c|}{\begin{tabular}[c]{@{}c@{}} $\alpha=\phantom{0}2.0$ \end{tabular}} & $ 0.989\pm 0.003$ & $ 0.995 \pm 0.001$    & $ 0.924\pm 0.001$   & $ 0.918\pm 0.002$           \\ \hline
\multicolumn{1}{|c|}{\begin{tabular}[c]{@{}c@{}} $\alpha=10.0$\end{tabular}}          & $ 0.982\pm 0.005$    & $ 0.991\pm 0.002$        & $ 0.923\pm 0.002$    & $ 0.916\pm 0.002$           \\ \hline
\end{tabular}
\end{table}
In Table~\ref{tab:ood_vary_beta}, we vary the regularization coefficient $\beta$~(\Eqref{eq:ua_ib}). We see that the model achieves the best performance within a range of $\beta$. For smaller values of $\beta$, the distortion term in~\Eqref{eq:ua_ib} is disregarded. Therefore, the main network is not restricted to producing encoders that can be well-represented by the codebook. For larger values of $\beta$, the training datapoints get closely attached to the centroids. This results in statistical balls of small radius (\Figref{d_train_enc_center}) effectively leaving out novel, in-distribution datapoints.

\begin{table}[!ht]
\centering
\caption{\textbf{Ablation study over regularization coefficient $\boldsymbol{\beta}$}. The model is best performing within a range of values. Large values of $\beta$ correspond to small balls around the centroids (\Figref{d_train_enc_center}) and vice-versa. The balls should be small enough to exclude OOD inputs but large enough to include unseen, in-distribution points to which the model can generalize.}
\label{tab:ood_vary_beta}
\begin{tabular}{cc|c|c|c|}
        \hline
\multicolumn{1}{|c|}{\multirow{2}{*}{\textbf{}}}    & \multicolumn{2}{c|}{\textbf{SVHN}}    & \multicolumn{2}{c|}{\textbf{CIFAR-100}}                     \\ \cline{2-5}
\multicolumn{1}{|c|}{}          & \textbf{AUROC} $\uparrow$  & \textbf{AUPRC}   $\uparrow$      & \textbf{AUROC} $\uparrow$ & \textbf{AUPRC}         $\uparrow$                                                                \\ \hline

\multicolumn{1}{|c|}{\begin{tabular}[c]{@{}c@{}} $\beta=0.0001$\end{tabular}} & $ 0.925\pm 0.429$ & $0.965 \pm 0.02$ & $ 0.70\phantom{0}\pm 0.019$         & $ 0.697\pm 0.02\phantom{0}$                                                                                                          \\ \hline
\multicolumn{1}{|c|}{\begin{tabular}[c]{@{}c@{}} $\beta=0.0005$\end{tabular}} & $ 0.98\phantom{0}\pm 0.009$               & $ 0.99\phantom{0}\pm 0.005$             & $0.917 \pm 0.002$ &  $ 0.91\phantom{0}\pm 0.003$                                                                                                         \\ \hline
\multicolumn{1}{|c|}{\begin{tabular}[c]{@{}c@{}} $\beta=0.001\phantom{0}$\end{tabular}} & $\boldsymbol{0.986 \pm 0.004}$          & $\boldsymbol{0.994 \pm 0.002}$               & $\boldsymbol{0.922 \pm 0.002}$             & $\boldsymbol{0.915 \pm 0.002}$            \\ \hline
\multicolumn{1}{|c|}{\begin{tabular}[c]{@{}c@{}} $\beta=0.005\phantom{0}$ \end{tabular}} & $ 0.985\pm 0.004$ & $ 0.993\pm 0.002$    & $ 0.921\pm 0.002$   & $ 0.914\pm 0.002$           \\ \hline
\multicolumn{1}{|c|}{\begin{tabular}[c]{@{}c@{}} $\beta=0.01\phantom{00}$\end{tabular}}          & $ 0.977\pm 0.01\phantom{0}$    & $ 0.988\pm 0.005$        & $ 0.914\pm 0.002$    & $0.907\pm 0.001$           \\ \hline
\end{tabular}
\end{table}
In Table~\ref{tab:ood_vary_z}, we are sweeping the bottleneck dimension. In Table~\ref{table:OoD_params_accuracy}, we see that 8-dimensional latent features can capture the information needed for the CIFAR-10 classification task. Further increasing the bottleneck size leads to irrelevant features that have no effect. On the other hand, smaller features disregard essential aspects of the input.

\begin{table}[!ht]
\centering
\caption{\textbf{Ablation study over bottleneck dimension. }Larger latent features improve OOD capability until a performance plateau is reached.}
\label{tab:ood_vary_z}
\begin{tabular}{cc|c|c|c|}
        \hline
\multicolumn{1}{|c|}{\multirow{2}{*}{\textbf{}}}    & \multicolumn{2}{c|}{\textbf{SVHN}}    & \multicolumn{2}{c|}{\textbf{CIFAR-100}}                     \\ \cline{2-5}
\multicolumn{1}{|c|}{}          & \textbf{AUROC} $\uparrow$  & \textbf{AUPRC}   $\uparrow$      & \textbf{AUROC} $\uparrow$ & \textbf{AUPRC}         $\uparrow$                                                                \\ \hline

\multicolumn{1}{|c|}{\begin{tabular}[c]{@{}c@{}} $\mathrm{dim}(\boldsymbol{z})=2\phantom{0}$\end{tabular}} & $ 0.748\pm 0.03\phantom{0}$ & $ 0.797\pm 0.014$ & $ 0.678\pm 0.014$         & $ 0.59\phantom{0}\pm 0.008$                                                                                                          \\ \hline
\multicolumn{1}{|c|}{\begin{tabular}[c]{@{}c@{}} $\mathrm{dim}(\boldsymbol{z})=4\phantom{0}$\end{tabular}} & $ 0.974\pm 0.01\phantom{0}$  & $ 0.98\phantom{0}\pm 0.004$ & $ 0.877\pm 0.012$ &  $ 0.872\pm 0.008$                                                                                                         \\ \hline
\multicolumn{1}{|c|}{\begin{tabular}[c]{@{}c@{}} $\mathrm{dim}(\boldsymbol{z})=8\phantom{0}$\end{tabular}} & $\boldsymbol{0.986 \pm 0.004}$          & $\boldsymbol{0.994 \pm 0.002}$               & $\boldsymbol{0.922 \pm 0.002}$             & $\boldsymbol{0.915 \pm 0.002}$            \\ \hline
\multicolumn{1}{|c|}{\begin{tabular}[c]{@{}c@{}} $\mathrm{dim}(\boldsymbol{z})=10$ \end{tabular}} & $ 0.983\pm 0.005$ & $ 0.991\pm 0.003$    & $ 0.924\pm 0.002$   & $0.915 \pm 0.001$           \\ \hline
\end{tabular}
\end{table}
Finally, the model was not sensitive to typical values, i.e. $>0.9$, for the momentum $\gamma$.
\subsection{DAB for detecting CIFAR-10 with noise corruptions\label{sec:corruption}}
 Table~\ref{fig:dab_on_corrupted}  shows the AUROC scores for the DAB of Section~\ref{sec:ood_experiments} on test CIFAR-10 versus test CIFAR-10 with common noise corruptions~\citep{hendrycks2018benchmarking}.
\begin{figure}
\centering
\begin{subfigure}{0.333\textwidth}
  \centering  \includegraphics[width=.9\linewidth]{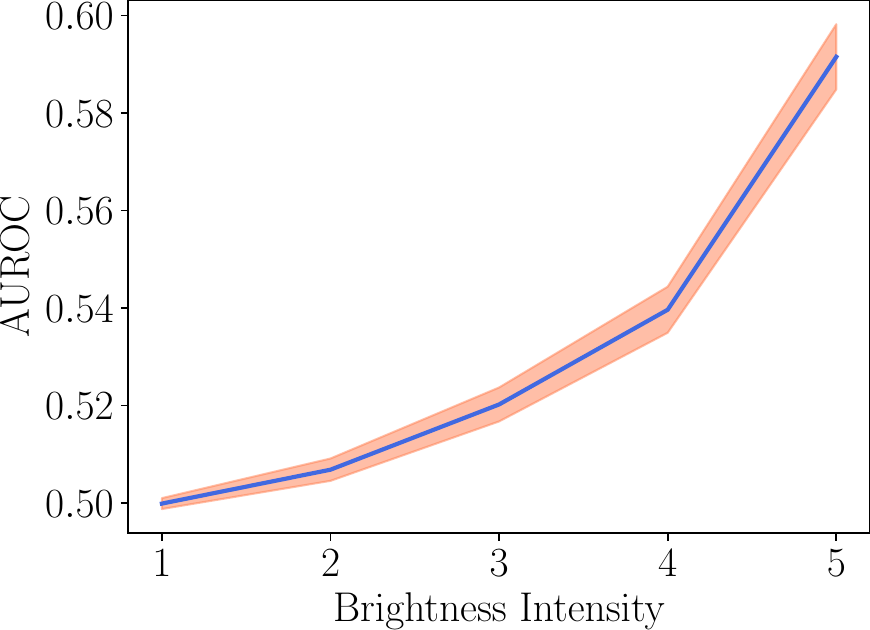}
  \caption{}
\end{subfigure}%
\begin{subfigure}{0.333\textwidth}
  \centering
  \includegraphics[width=.9\linewidth]{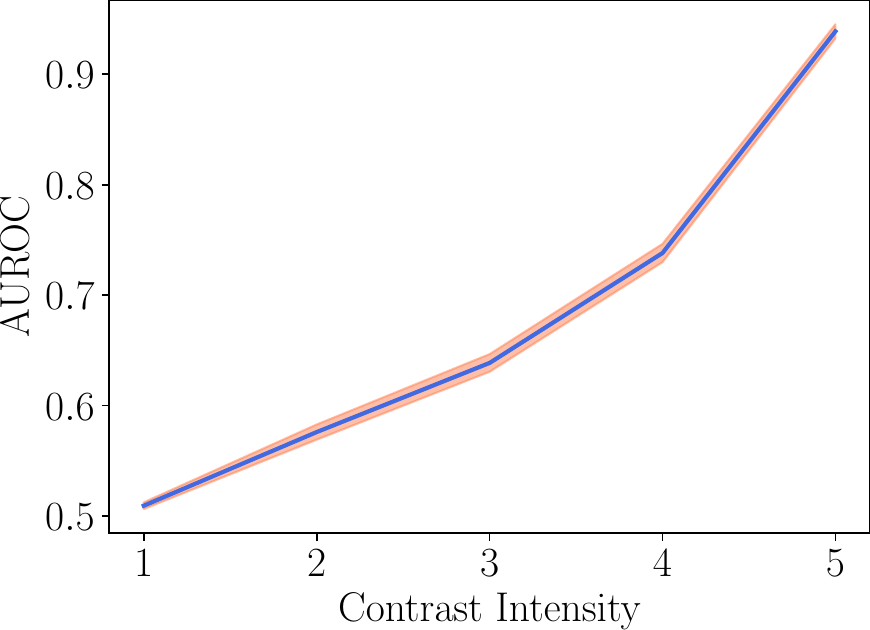}
  \caption{}
\end{subfigure}
\begin{subfigure}{0.333\textwidth}
  \centering
  \includegraphics[width=.9\linewidth]{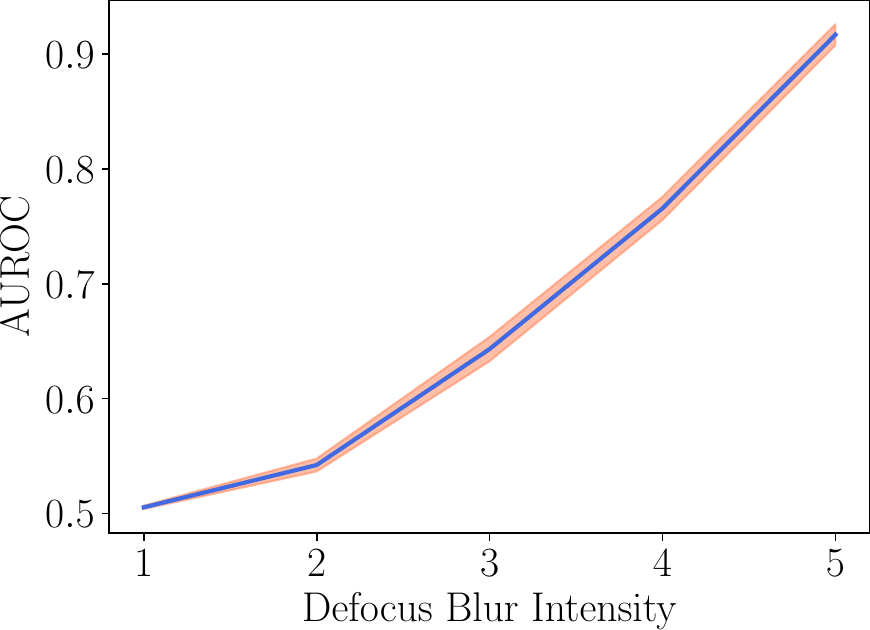}
  \caption{}
\end{subfigure}
\begin{subfigure}{0.333\textwidth}
  \centering  \includegraphics[width=.9\linewidth]{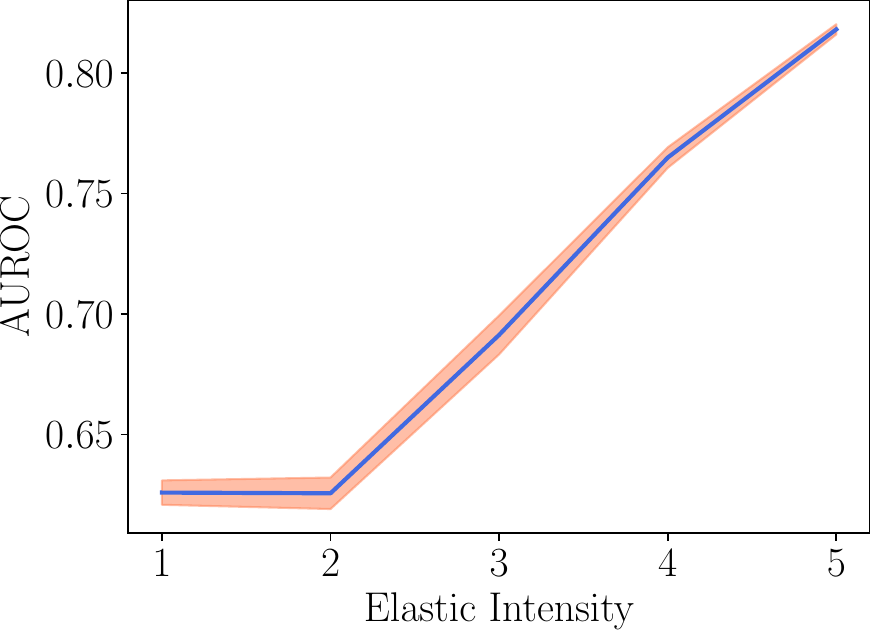}
  \caption{}
\end{subfigure}%
\begin{subfigure}{0.333\textwidth}
  \centering
  \includegraphics[width=.9\linewidth]{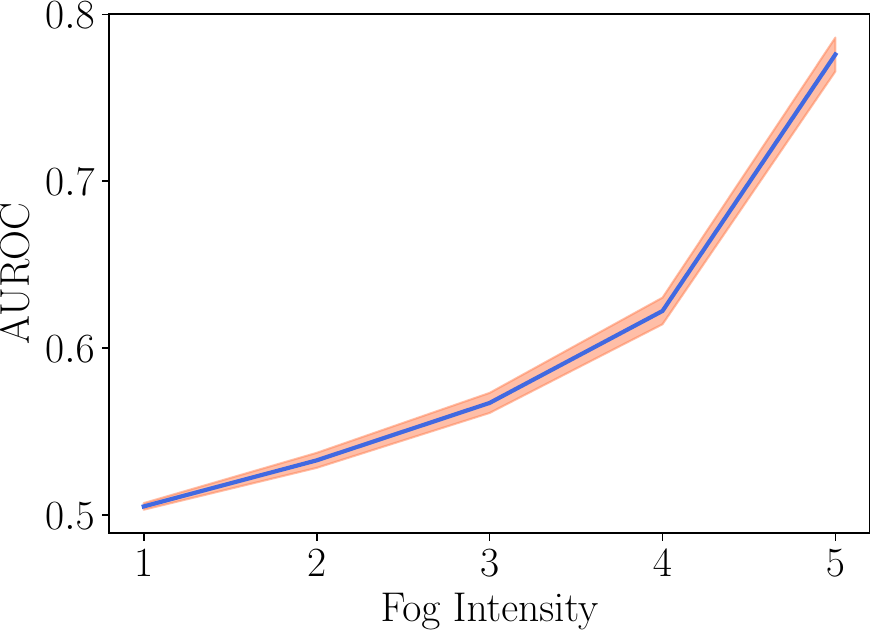}
  \caption{}
\end{subfigure}
\begin{subfigure}{0.333\textwidth}
  \centering
  \includegraphics[width=.9\linewidth]{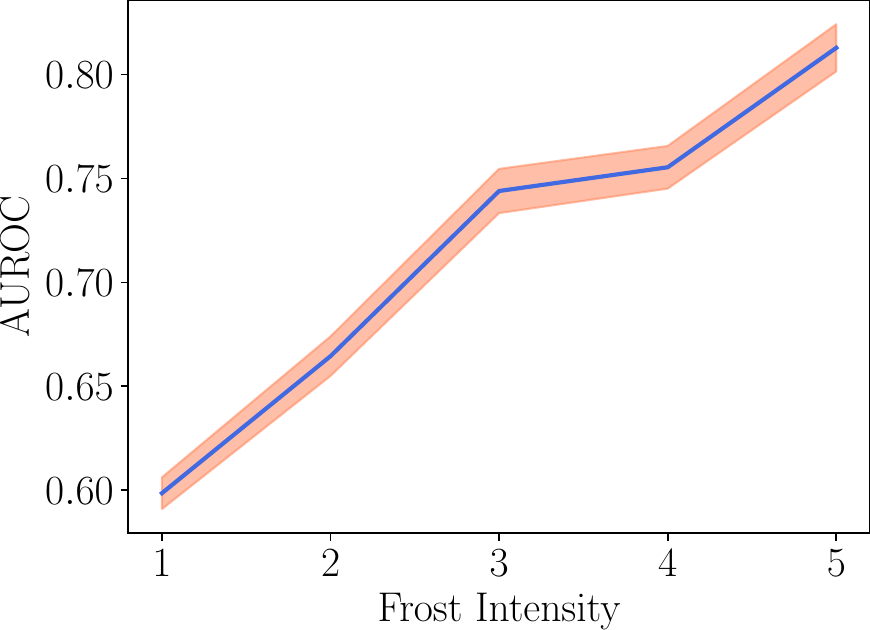}
  \caption{}
\end{subfigure}
\begin{subfigure}{0.333\textwidth}
  \centering  \includegraphics[width=.9\linewidth]{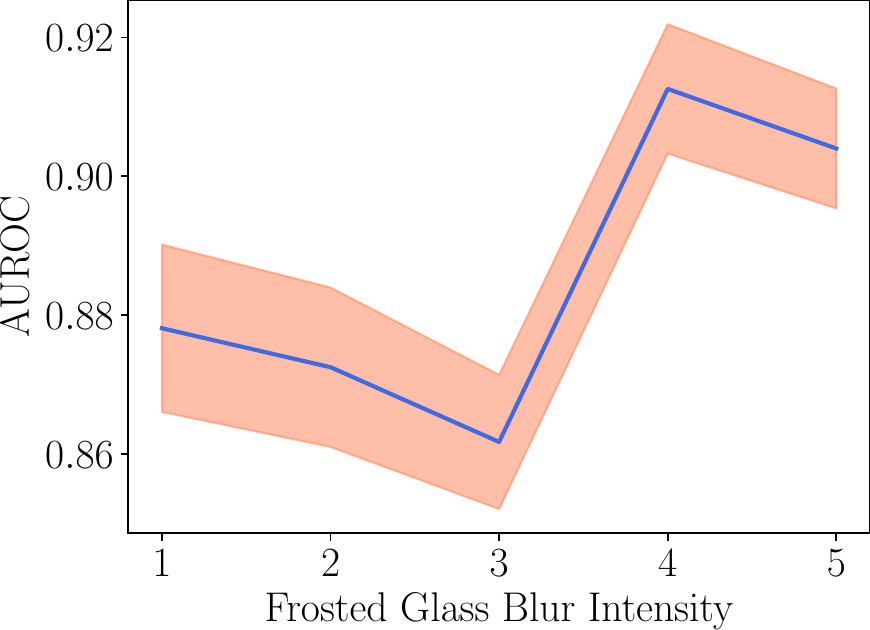}
  \caption{}
\end{subfigure}%
\begin{subfigure}{0.333\textwidth}
  \centering
  \includegraphics[width=.9\linewidth]{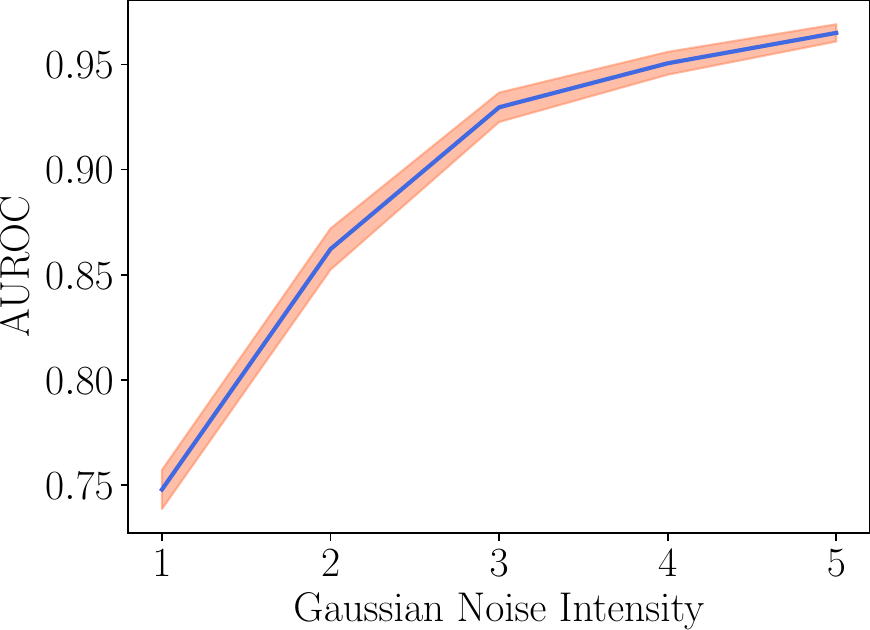}
  \caption{}
\end{subfigure}
\begin{subfigure}{0.333\textwidth}
  \centering
  \includegraphics[width=.9\linewidth]{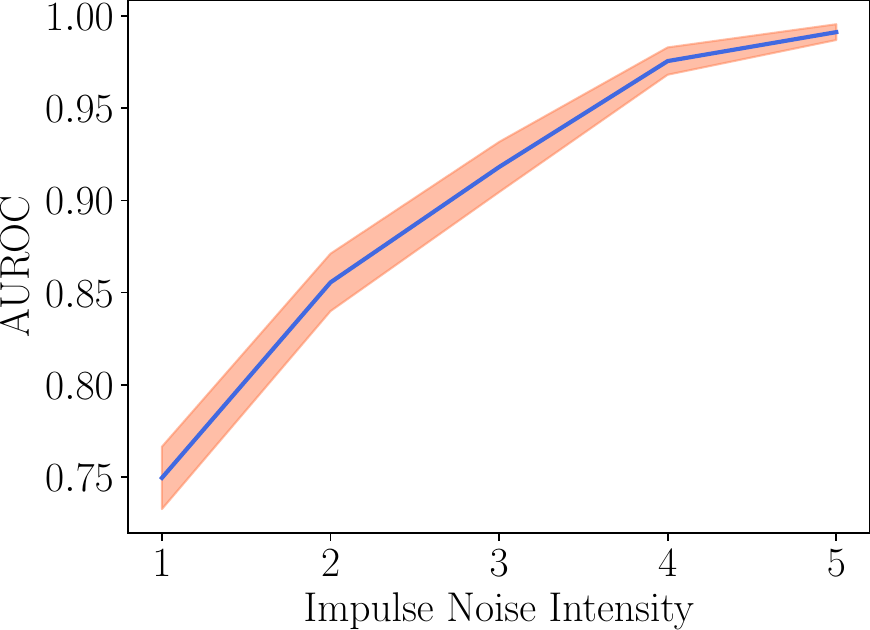}
  \caption{}
\end{subfigure}
\begin{subfigure}{0.333\textwidth}
  \centering  \includegraphics[width=.9\linewidth]{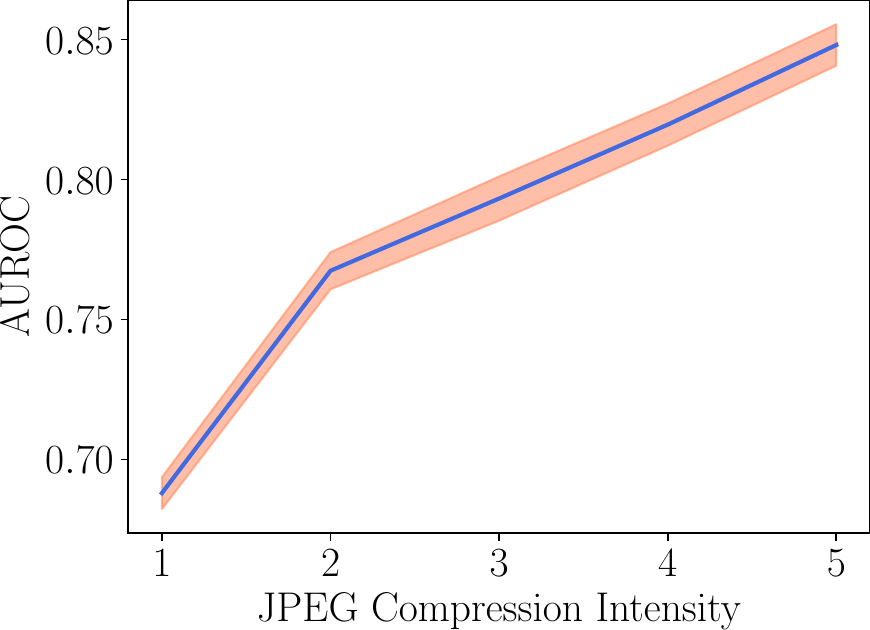}
  \caption{}
\end{subfigure}%
\begin{subfigure}{0.333\textwidth}
  \centering
  \includegraphics[width=.9\linewidth]{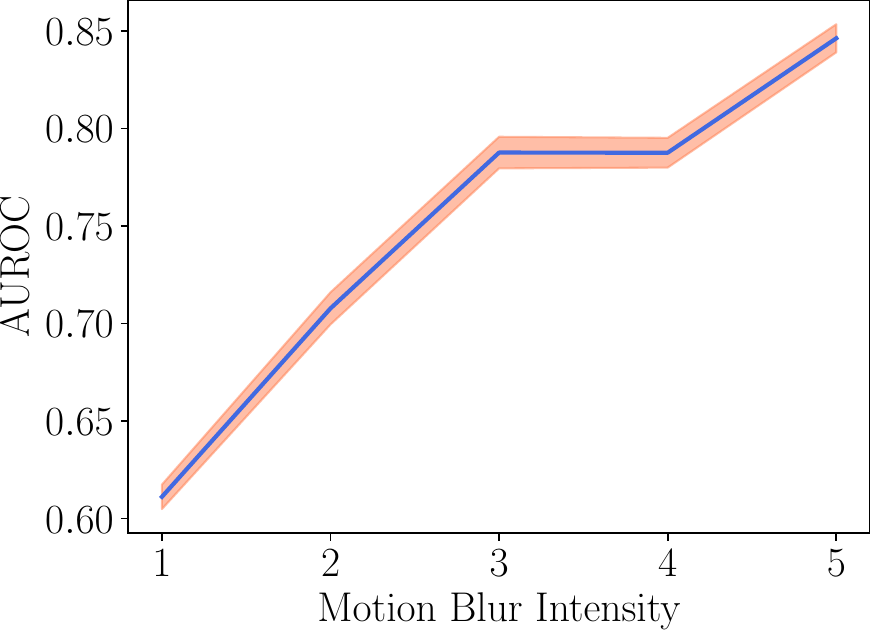}
  \caption{}
\end{subfigure}
\begin{subfigure}{0.333\textwidth}
  \centering
  \includegraphics[width=.9\linewidth]{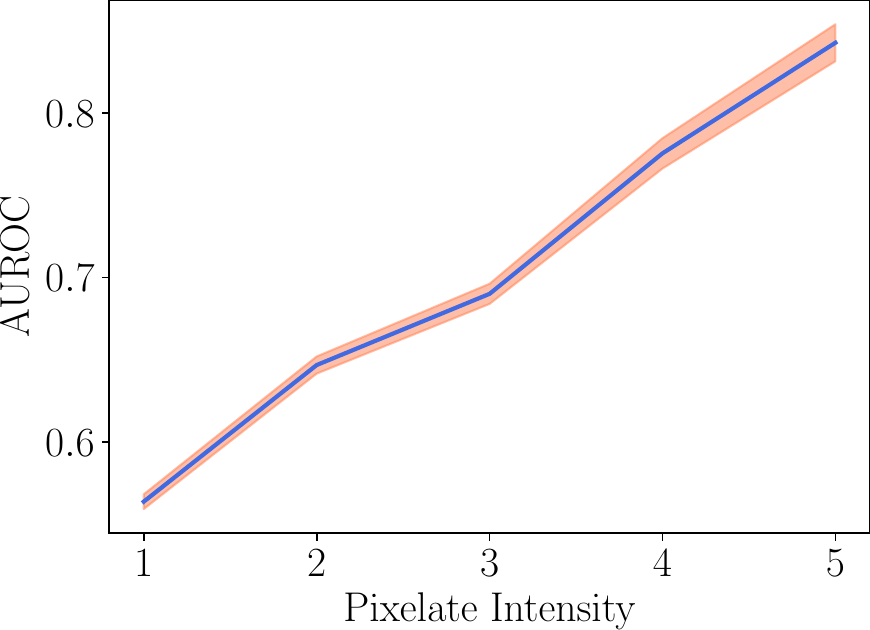}
  \caption{}
\end{subfigure}
\begin{subfigure}{0.333\textwidth}
  \centering  \includegraphics[width=.9\linewidth]{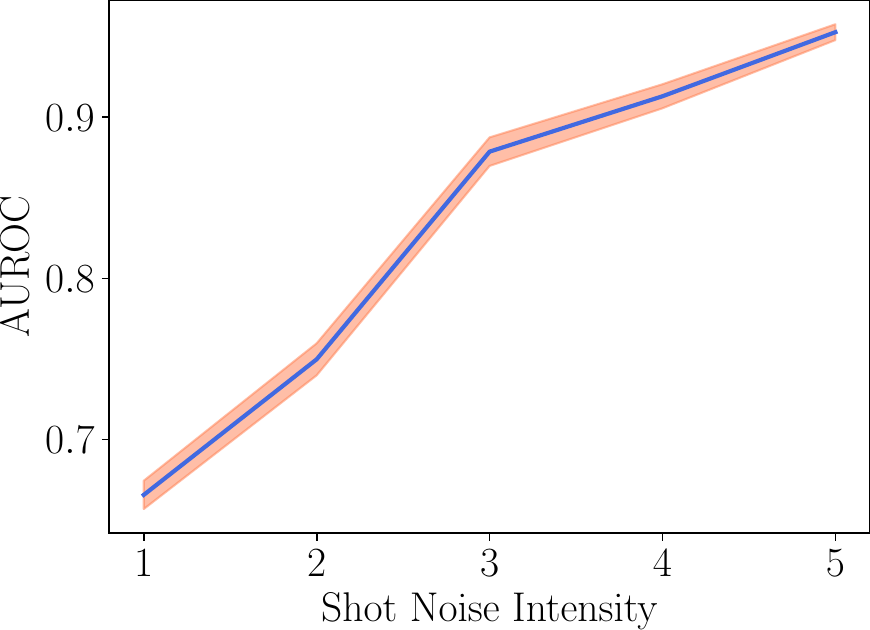}
  \caption{}
\end{subfigure}%
\begin{subfigure}{0.333\textwidth}
  \centering
  \includegraphics[width=.9\linewidth]{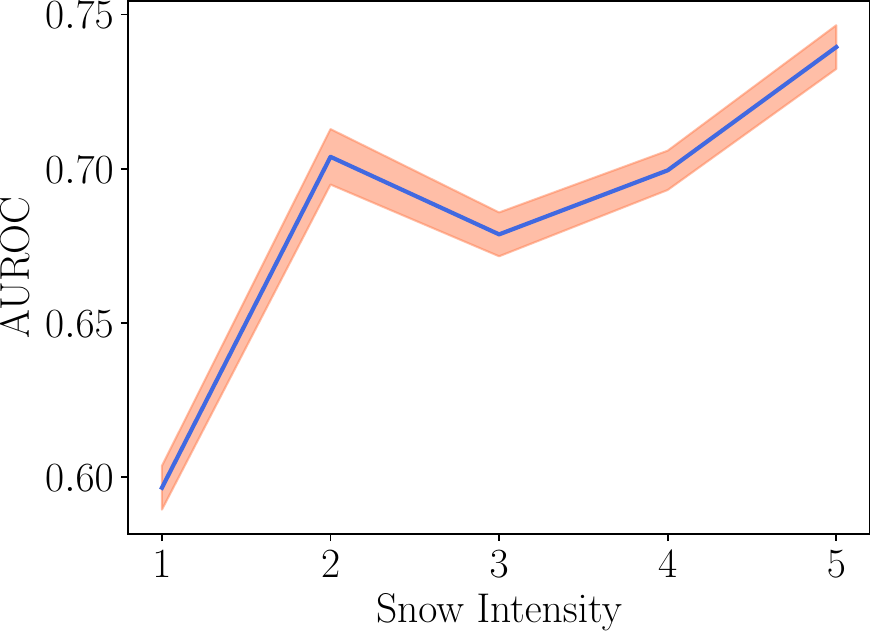}
  \caption{}
\end{subfigure}
\begin{subfigure}{0.333\textwidth}
  \centering
  \includegraphics[width=.9\linewidth]{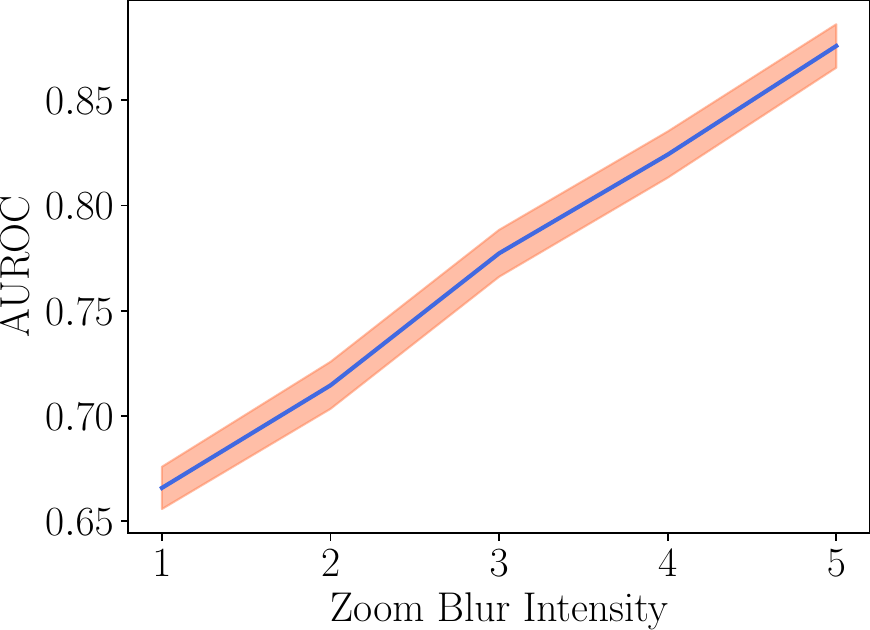}
  \caption{}
\end{subfigure}
\caption{DAB's AUROC vs corruption intensity for common corruptions to test CIFAR. The shaded area corresponds to $+/-$ one standard deviation across 10 random seeds.}
\label{fig:dab_on_corrupted}
\end{figure}

\subsection{Qualitative Evaluation of DAB on CIFAR-10 (Section~\ref{sec:ood_experiments} 
 continued.)}\label{sec:visualize_clusters}
In \Figref{fig:5_clusters} and \Figref{fig:10_clusters_gd}, we also investigate qualitatively the rest of the IB methods examined in Sections~\ref{sec:ood_experiments} and~\ref{sec:ablation} (Tables~\ref{tab:ood_baselines},~\ref{tab:ood_vary_k}).
\begin{figure}[!ht]
\centering
\caption{\textbf{Qualitative evaluation with 5 entries.} We visualize the number of test data points per class assigned to each centroid at the end of three (first, middle, last) iterations of our alternating minimization algorithm (Algorithm~\ref{alg:training_ig_vib}). We notice that semantically similar classes are assigned to the same code. For example, dogs (class 5) and cats (class 3) are both represented by centroid 3. Similar observations hold for the pair of cars (class 1)/ trucks (class 9) and airplane (class 0)/ ships (class 8).}
\label{fig:5_clusters}
\includegraphics[width=0.9\textwidth]{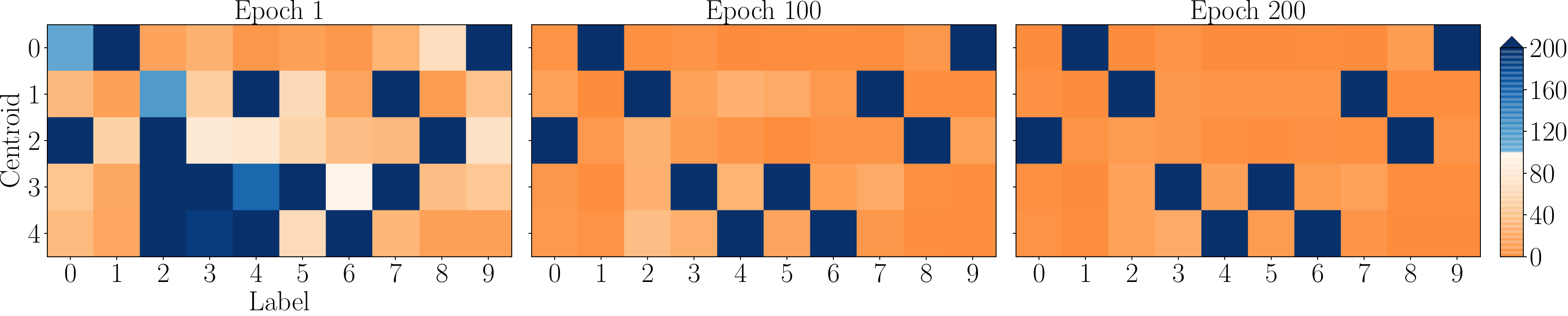}
\end{figure}
\begin{figure}[!ht]
\centering
 \caption{\textbf{Qualitative evaluation of a $\mathbf{10-}$component mixture marginal in the VIB trained with gradient descent.} We visualize the number of test data points per class assigned to each component at the end of three (first, middle, last) epochs when the mixture variational marginal $q(\boldsymbol{z};\boldsymbol{\phi})$ and the rest of the network (encoder and decoder) are jointly trained via gradient descent~\citep{alemi2018uncertainty}. We notice that gradient descent conflates features of different classes. This observation can help explain the inferior performance of the IB gradient descent method on OOD tasks (Table~\ref{tab:ood_baselines}). Moreover, it justifies the need for guiding optimization through the alternating minimization steps of Algorithm~\ref{alg:training_ig_vib}.}
\label{fig:10_clusters_gd}
\includegraphics[width=0.9\textwidth]{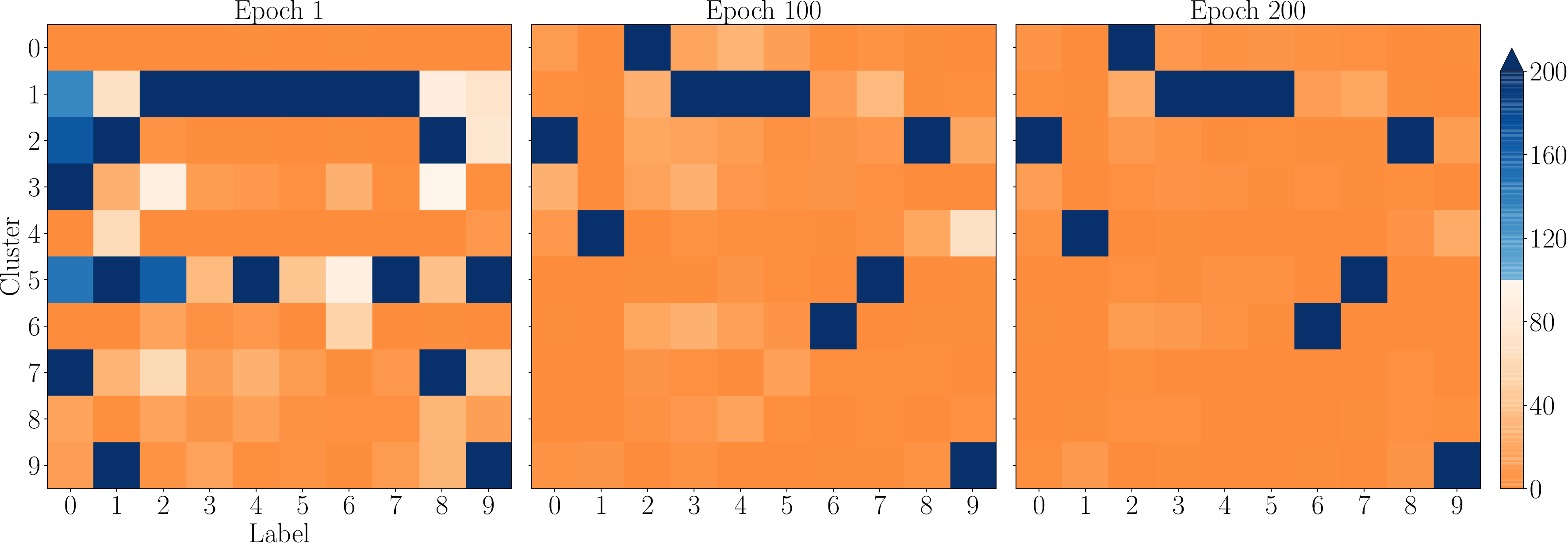}
\end{figure}
Finally, \Figref{fig:calibration_qual_eval} visualizes DAB's calibration demonstrating that model's accuracy negatively correlates with its uncertainty. We verify that misclassification of a test datapoint is signaled by its large distance from the codebook.
\begin{figure}[!ht]
\centering
\caption{\textbf{Calibration plot of DAB on CIFAR-10 test data.} We qualitatively assess the proposed uncertainty score in terms of calibration. We train 10 models with different random seeds. For each model, we find the 20 quantiles of the estimated uncertainty on test data. We compute the accuracy for the datapoints whose uncertainty falls between two successive quantiles. We report the mean uncertainty and accuracy along with one standard deviation error bars across the runs. We see that the accuracy is higher in the quantile buckets of lower uncertainty. }\label{fig:calibration_qual_eval}
\includegraphics[width=0.5\textwidth]{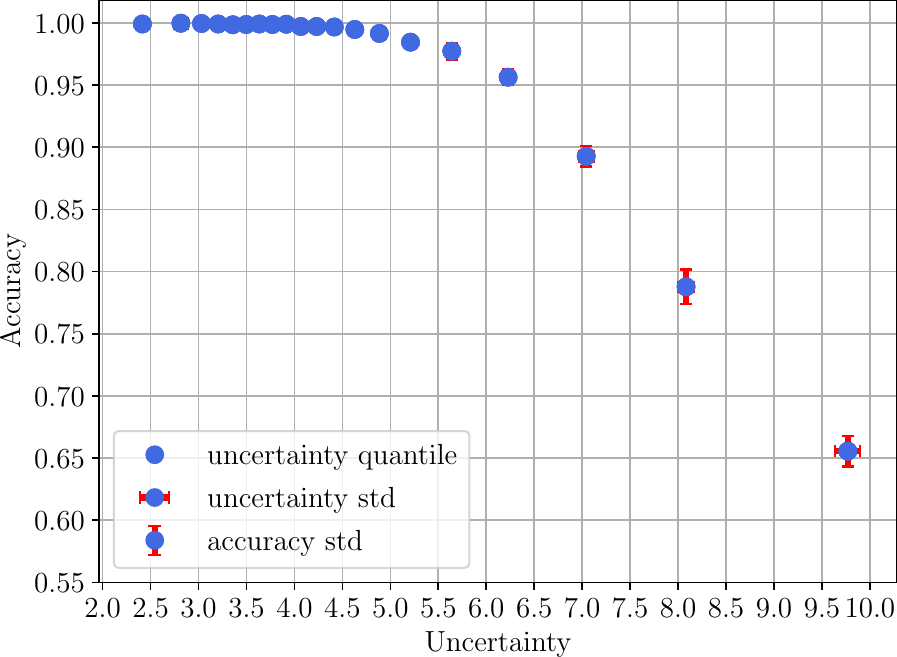}
\end{figure}
\subsection{Out-of-Distribution Detection on UCI Regression Tasks}\label{sec:regression_ood}
Currently, the bulk of uncertainty-aware methods is designed for and applied to image classification in supervised learning settings. However, as shown by~\citet{jaeger2023a}, a wide range of tasks and datasets should be considered when evaluating OOD methods. Moreover, there is an ongoing importance to effective uncertainty estimation for regression tasks, especially in unsupervised learning scenarios. For example, in deep reinforcement learning, uncertainty quantification for the estimated Q-values can be leveraged for efficient exploration~\citep{lee2021sunrise}. As already pointed out, DAB provides a unified notion of uncertainty for both regression and classification tasks.

In Table~\ref{tab:energy_efficiency}, we test the OOD capability of our model when trained on the normalized UCI, Energy Efficiency dataset~\citep{uci_datasets}. As in the image classification tasks, the positive label corresponds to the OOD inputs. The results were averaged across 10 runs. We contrast our model with ensemble methods. We see that DAB consistently demonstrates OOD capability and outperforms 4-member ensembles on all OOD tasks (of varying difficulty). In Section~\ref{sec:uci_regression_setup}, we provide the experimental details. Here, we comment that all centroids were assigned a roughly equal number of datapoints indicating the need for codebook sizes larger than one (recall that DAB with a unit-size codebook corresponds to the standard VIB~\citep{alemi2016deep}). 
\begin{table}[!htb]
    \centering    
     \caption{\textbf{DAB's OOD performance on the UCI energy efficiency dataset.} DAB can consistently and competitively solve a diversity of OOD regression tasks.}
     \label{tab:energy_efficiency}
    \begin{tabular}{l*{6}{c}}
    \toprule
     \textbf{OOD Dataset} &  \textbf{Model} & \multicolumn{2}{c}{ \textbf{OOD Scores}} \\
    \cmidrule(lr){3-4}
    & &  \textbf{AUROC $\uparrow$}   &  \textbf{AUPRC $\uparrow$}  \\
    \midrule
    kin8nm & DAB & $\boldsymbol{0.982 \pm 0.008}$ & $\boldsymbol{0.998 \pm 0.001}$ \\
    & Ensemble of 2 & $0.916 \pm 0.025$ & $0.992 \pm 0.003$  \\
    & Ensemble of 4 & $0.977 \pm 0.008$ & \boldsymbol{$0.998 \pm 0.001$}  \\

    \midrule

    concrete strength & DAB & \boldsymbol{$0.978 \pm  0.011$} & \boldsymbol{$0.988 \pm 0.006$} \\
    & Ensemble of 2 & $0.898 \pm 0.043$ & $0.941 \pm 0.028$  \\
    & Ensemble of 4 & $ 0.967 \pm 0.02\phantom{0}$ & $0.979 \pm 0.013$  \\

    \midrule

    protein structure & DAB & \boldsymbol{$0.989 \pm 0.017$} & $\boldsymbol{1.0}\phantom{00} \boldsymbol{\pm 0.001}$ \\
    & Ensemble of 2 & $0.875 \pm 0.059$ & $0.998 \pm 0.001$  \\
    & Ensemble of 4 & $0.971 \pm 0.018$ & $0.999 \pm 0.001$  \\

    \midrule

    boston housing & DAB & $\boldsymbol{0.988 \pm 0.008}$ & $\boldsymbol{0.988 \pm 0.007}$ \\
    & Ensemble of 2 & $0.888 \pm 0.043$ & $0.887 \pm 0.047$  \\
    & Ensemble of 4 & $0.969 \pm 0.028$ & $0.967 \pm 0.03\phantom{0}$  \\

   \bottomrule
    \end{tabular}
\end{table}

\section{Implementation details}
\subsection{Implementation details for the synthetic regression tasks (Section~\ref{sec:regression_experiments})}\label{sec:synthetic_setup}

For the example of~\Figref{fig:regression_uniform}, we generate 20 training data points from the uniform distribution $\mathcal{U}[-4, 4]$. The test data are evenly taken in $[-5, 5]$. The targets are $y=x^3+\epsilon$, where $\epsilon \sim \mathcal{N}(0,9)$. We use a single centroid to represent the whole dataset. We verify that the model's confidence and accuracy decline as we move far away from the data. In~\Figref{fig:regression_clusters}, we stress test our model under a harder variant of the first problem. In this case, we create two clusters of training data points sampled from $\mathcal{U}[-5, -2]$ and $\mathcal{U}[2, 5]$. We use two codes.

We use a network with 3 dense layers. We apply DAB to the last one. The intermediate layers have 100 hidden units and ELU non-linearity. We perform $1500$ training iterations. We use a single encoder sample during training. The optimizer of both the main network and the centroids are set to \texttt{tf.keras.optimizers.Adam} with initial learning rates $\eta_{\boldsymbol{\theta}}=0.001$ and $\eta_{\boldsymbol{\phi}}=0.01$ respectively. 
The rest of the hyperparameters are set to the default values of \texttt{tf.keras.keras.layers.Dense}. Regarding the parametrization and initialization of encoders and centroids, we follow the setup described in Section~\ref{sec:setup_ood_experiments}. 

In Table~\ref{table:regression_hyperparameters}, we provide the hyperparameters related to DAB. Note that the dataset for these tasks consists of only $20$ datapoints. Therefore, we can use the whole dataset at each gradient update step. In this case, there is no need to maintain moving averages for the update of the assignment probabilities and covariance matrices.

\begin{table}[H]
\centering
\caption{\textbf{A summary of DAB hyperparameters for the synthetic regression tasks.}}
\begin{tabular}{||c c c||} 
 \hline
 Hyperparameter & Description &  Value  \\ [0.5ex] 
 \hline\hline
 $\beta$ & {\begin{tabular}[c]{@{}c@{}}  \begin{tabular}{@{}c@{}} Regularization coefficient (\Eqref{eq:ua_ib})  \end{tabular}   \end{tabular}}  & 1.0  \\ 
 $\alpha$ & {\begin{tabular}[c]{@{}c@{}}  \begin{tabular}{@{}c@{}} Temperature (\Eqref{eq:vib_ii_e_step_sol})  \end{tabular}   \end{tabular}} & 5.0  \\
 $\mathrm{dim}(\boldsymbol{z})$ & Dimension of latent features & $8$  \\
 $k$ & Number of centroids & \begin{tabular}{@{}c@{}} 1 for \Figref{fig:regression_uniform} \\ 2 for \Figref{fig:regression_clusters} \end{tabular}    \\
 $\gamma$ & Momentum of moving averages (\Eqref{eq:momentum}) & 0.0  \\ [1ex] 
 \hline
\end{tabular}
\label{table:regression_hyperparameters}
\end{table}
\subsection{Implementation details for the CIFAR-10 experiments (Tables~\ref{tab:ood_baselines},~\ref{table:calibration_score_},~\ref{table:OoD_params_accuracy}, Sections~\ref{sec:visualize_clusters},~\ref{sec:ablation})}  \label{sec:setup_ood_experiments}
All models are trained on four 32GB V100 GPUs. The per-core batch size is 64. For fair comparisons, we train all models for 200 epochs. This number may deviate from the suggested setup of some baselines such as RegMixup ~\citep{pinto2022using} or DDU~\citep{Mukhoti_2023_CVPR} which are originally trained for 350 epochs. For the IB and GP methods, we backpropagate through a single sample. 

DAB is interleaved between the Wide ResNet 28-10 features (right after the flattened average pooling layer) and the last dense layer of the classifier. In this experiment, we use a full-covariance multivariate Gaussian for the encoder and the centroids. The encoder's network first learns a matrix $\boldsymbol{S}$ as $\boldsymbol{S}=\boldsymbol{U} \sqrt{\boldsymbol{\Lambda}}$. $\boldsymbol{U}$ is a unitary matrix. $\boldsymbol{\Lambda}$ is a positive definite, diagonal matrix. $\boldsymbol{U}$ and $\boldsymbol{\Lambda}$ are computed from the SVD decomposition of a symmetric matrix. To enforce positive definiteness of $\boldsymbol{\Lambda}$ with small initial values, we transform its entries by \texttt{softplus}($\lambda-5.0$). A similar transformation was used by~\citet{alemi2016deep}. Finally, the covariance matrix is given by: $\boldsymbol{\Sigma}=\boldsymbol{S}\boldsymbol{S}^T$.

We train the means of the centroids using \texttt{tf.keras.optimizers.Adam(learning\_rate=1e-1)}.  Only for the case $k=1$ in Table~\ref{tab:ood_vary_k}, we used \texttt{tf.keras.optimizers.Adam(learning\_rate=1e-3)}.
The centroid means are initialized with \texttt{tf.random\_normal\_initializer(mean=0.0,stddev=0.1)}. For the hyperparameters that are not related to the DAB, we preserve the default values used in: \\ {\small \url{https://github.com/google/uncertainty-baselines/blob/main/baselines/cifar/deterministic.py}}.

\begin{table}[H]
\centering
\caption{\textbf{A summary of DAB hyperparameters for the CIFAR-10 classification tasks.}}
\begin{tabular}{||c c c||} 
 \hline
 Hyperparameter & Description &  Value  \\ [0.5ex] 
 \hline\hline
 $\beta$ & {\begin{tabular}[c]{@{}c@{}}  \begin{tabular}{@{}c@{}} Regularization coefficient (\Eqref{eq:ua_ib})  \end{tabular}   \end{tabular}}  & 0.001  \\ 
 $\alpha$ & {\begin{tabular}[c]{@{}c@{}}  \begin{tabular}{@{}c@{}} Temperature (\Eqref{eq:vib_ii_e_step_sol})  \end{tabular}   \end{tabular}} & 1.0  \\
$\mathrm{dim}(\boldsymbol{z})$ & Dimension of latent features & 8  \\
 $k$ & Number of centroids & 10 \\
 $\gamma$ & Momentum of moving averages (\Eqref{eq:momentum}) & 0.99  \\ [1ex] 
 \hline
\end{tabular}
\label{table:ood_hyperparameters}
\end{table}

\subsection{Implementation details for the ImageNet-1K experiments (Table~\ref{table:imagenet})}\label{sec:imagenet_setup}

All models are trained on four 48GB  RTX A6000 GPUs. The per-core batch size is 256. We initialize the network with the weights of a pre-trained ResNet-50 network\footnote{ \texttt{tf.keras.applications.resnet50.ResNet50}}. We train DAB only for 70 epochs.

The ResNet-50 features (without including the fully-connected layer at the top of the network) are first passed through three fully connected layers, each with 2048 units and ReLU activation. We add a residual connection between the first and last dense layer before DAB's input. In this experiment and due to the higher dimension of the latent features, we use diagonal multivariate Gaussians for the encoder and the centroids. The encoder's scale matrix is given by $\boldsymbol{S}=\text{\texttt{diag}}(\text{\texttt{softplus}}(\boldsymbol{o}-5.0))$, where $\boldsymbol{o}$ are the encoder's outputs corresponding to the covariance. The covariance matrix is given by: $\boldsymbol{\Sigma}=\boldsymbol{S}\boldsymbol{S}^T$. Finally, we use Eq. 9 of \citet{davis2006differential} to update the codebook's covariance matrices where only the diagonal entries are computed.

To improve model's calibration, we add a max margin-loss term in the objective function of~\Eqref{eq:ua_ib} for the misclassified datapoints:
\begin{align}
\ell(\boldsymbol{x}) = \max(0,\text{U}_{lb}-\mathrm{uncertainty}(\boldsymbol{x}))
\label{eq:u_lb}.
\end{align}
$\mathrm{uncertainty}(\boldsymbol{x})$ is defined in~\Eqref{eq:distance_3}. This term encourages higher model's uncertainty for the mispredicted training datapoints. In the experiment, we set the uncertainty lower bound as $\text{U}_{lb}=100$. Moreover, only the correctly classified training examples are quantized by the codebook~(\Eqref{eq:r_d_f_c_dtrain}). For the OOD experiments, we quantize all training datapoints regardless the classification outcome. Therefore, the loss term in~\Eqref{eq:u_lb} is omitted. 

Table~\ref{table:ood_hyperparameters_imagenet} provides DAB's hyperparameters when we backpropagate to ResNet-50. Table~\ref{table:ood_hyperparameters_imagenet_pretrained} provides DAB's hyperparameters when the ResNet-50 weights are frozen during training. In the first case, the main network is trained using \texttt{tf.keras.optimizers.SGD(learning\_rate=1e-1)}. In the second case, encoder's dense layers and the decoder are trained using \texttt{tf.keras.optimizers.SGD(learning\_rate=5e-2)}. We train centroids' means using \texttt{tf.keras.optimizers.Adam}. The centroid means are initialized with \texttt{tf.random\_normal\_initializer(mean=0.0,stddev=0.1)}. 
For the rest of the hyperparameters, we preserve the default values used in: \\ {\small \url{https://github.com/google/uncertainty-baselines/blob/main/baselines/imagenet/deterministic.py}}.

\begin{table}[H]
\centering
\caption{\textbf{Hyperparameters for DAB with ResNet-50 fine-tuning for the ImageNet-1K classification tasks  (Table~\ref{table:imagenet}). }}
\begin{tabular}{||c c c||} 
 \hline
 Hyperparameter & Description &  Value  \\ [0.5ex] 
 \hline\hline
  $\eta_{\boldsymbol{\phi}}$ & {\begin{tabular}[c]{@{}c@{}}  \begin{tabular}{@{}c@{}} Codebook's learning rate   \end{tabular}   \end{tabular}}  & \begin{tabular}{@{}c@{}} 0.4 for OOD  \\ 0.1 for Calibration \end{tabular} \\
 $\beta$ & {\begin{tabular}[c]{@{}c@{}}  \begin{tabular}{@{}c@{}} Regularization coefficient (\Eqref{eq:ua_ib})  \end{tabular}   \end{tabular}}  & \begin{tabular}{@{}c@{}} 0.005 for OOD  \\ 0.01 for Calibration \end{tabular}  \\ 
 $\alpha$ & {\begin{tabular}[c]{@{}c@{}}  \begin{tabular}{@{}c@{}} Temperature (\Eqref{eq:vib_ii_e_step_sol})  \end{tabular}   \end{tabular}} & 2.0  \\
$\mathrm{dim}(\boldsymbol{z})$ & Dimension of latent features & 80  \\
 $k$ & Number of centroids & 1000 \\
 $\gamma$ & Momentum of moving averages (\Eqref{eq:momentum}) & 0.99  \\ [1ex] 
 \hline
\end{tabular}
\label{table:ood_hyperparameters_imagenet}
\end{table}
\begin{table}[H]
\centering
\caption{\textbf{Hyperparameters for DAB without ResNet-50 fine-tuning for the ImageNet-1K classification tasks (Table~\ref{table:imagenet}). }}
\begin{tabular}{||c c c||} 
 \hline
 Hyperparameter & Description &  Value  \\ [0.5ex] 
 \hline\hline
  $\eta_{\boldsymbol{\phi}}$ & {\begin{tabular}[c]{@{}c@{}}  \begin{tabular}{@{}c@{}} Codebook's learning rate   \end{tabular}   \end{tabular}}  & \begin{tabular}{@{}c@{}} 0.4 for OOD  \\ 0.1 for Calibration \end{tabular} \\
 $\beta$ & {\begin{tabular}[c]{@{}c@{}}  \begin{tabular}{@{}c@{}} Regularization coefficient (\Eqref{eq:ua_ib})  \end{tabular}   \end{tabular}}  & \begin{tabular}{@{}c@{}} 0.0025 for OOD  \\ 0.02 for Calibration \end{tabular}  \\ 
 $\alpha$ & {\begin{tabular}[c]{@{}c@{}}  \begin{tabular}{@{}c@{}} Temperature (\Eqref{eq:vib_ii_e_step_sol})  \end{tabular}   \end{tabular}} & 2.0  \\
$\mathrm{dim}(\boldsymbol{z})$ & Dimension of latent features & 80  \\
 $k$ & Number of centroids & 1000 \\
 $\gamma$ & Momentum of moving averages (\Eqref{eq:momentum}) & 0.99  \\ [1ex] 
 \hline
\end{tabular}
\label{table:ood_hyperparameters_imagenet_pretrained}
\end{table}
\subsection{Implementation details for the UCI regression experiments (Section~\ref{sec:regression_ood})}\label{sec:uci_regression_setup}

The optimizer of the codebook was set to \texttt{tf.keras.optimizers.Adam(learning\_rate=1e-1)}. The architecture consists of an MLP network with one hidden layer of dimension 50 and ReLU nonlinearity. The exact backbone architecture along with the hyperparameters that are not related to the DAB were kept the same and can be found here:
\url{https://github.com/google/uncertainty-baselines/tree/main/baselines/uci}. 

As for the rest of experiments, we applied DAB between the penultimate and output layer of the architecture. Regarding the parametrization and initialization of encoders and centroids, we follow the setup described in Section~\ref{sec:setup_ood_experiments}. Table~\ref{table:ood_uci_hyperparameters} reports the DAB hyperparameters used for this task.

\begin{table}[H]
\centering
\caption{\textbf{A summary of DAB hyperparameters for the UCI regression tasks.}}
\begin{tabular}{||c c c||} 
 \hline
 Hyperparameter & Description &  Value  \\ [0.5ex] 
 \hline\hline
 $\beta$ & {\begin{tabular}[c]{@{}c@{}}  \begin{tabular}{@{}c@{}} Regularization coefficient (\Eqref{eq:ua_ib})  \end{tabular}   \end{tabular}}  & 0.001  \\ 
 $\alpha$ & {\begin{tabular}[c]{@{}c@{}}  \begin{tabular}{@{}c@{}} Temperature (\Eqref{eq:vib_ii_e_step_sol})  \end{tabular}   \end{tabular}} & 1.0  \\
$\mathrm{dim}(\boldsymbol{z})$ & Dimension of latent features & 4  \\
 $k$ & Number of centroids & 2 \\
 $\gamma$ & Momentum of moving averages (\Eqref{eq:momentum}) & 0.99  \\ [1ex] 
 \hline
\end{tabular}
\label{table:ood_uci_hyperparameters}
\end{table}


\end{document}